\documentclass[10pt]{article}

\usepackage[accepted]{tmlr}

\usepackage{amsmath,amsfonts,bm}

\def\eqref#1{equation~\ref{#1}}

\def\1{\bm{1}}

\DeclareMathAlphabet{\mathsfit}{\encodingdefault}{\sfdefault}{m}{sl}
\SetMathAlphabet{\mathsfit}{bold}{\encodingdefault}{\sfdefault}{bx}{n}

\newcommand{\E}{\mathbb{E}}
\newcommand{\Ls}{\mathcal{L}}

\usepackage{hyperref}
\hypersetup{pdftitle={[Re] Boosting the Visual Interpretability of CLIP via Adversarial Fine-Tuning}, pdfauthor={Anton Nuzhdin, Andrei Gruzitski, Balázs Egyed, Aleksandr Raudvee}}
\usepackage{url}
\usepackage{graphicx}
\usepackage{booktabs}
\usepackage{array}
\usepackage{multirow}
\usepackage{xcolor}
\usepackage{soul}
\usepackage{subcaption}
\usepackage{algorithm}
\usepackage{algpseudocode}
\usepackage{amsthm}
\newtheorem{proposition}{Proposition}

\title{[Re] Boosting the Visual Interpretability of CLIP\\ via Adversarial Fine-Tuning}

\author{
\name Anton Nuzhdin\footnotemark[1], Andrei Gruzitski\footnotemark[1], Balázs Egyed\footnotemark[1], Aleksandr Raudvee\footnotemark[1] \\
\addr \{anton.nuzhdin, andrei.gruzitski, balazs.egyed, alex.raudvee\}@student.uva.nl \\
\addr University of Amsterdam
}

\def\month{09}
\def\year{2026}
\def\openreview{\url{https://openreview.net/forum?id=uPXRBZfkYy}}

\definecolor{revcolor}{rgb}{0,0,0.0}
\newcommand{\rev}[1]{#1}

\begin{document}

\maketitle
\renewcommand{\thefootnote}{\fnsymbol{footnote}}
\footnotetext[1]{These authors contributed equally.}
\renewcommand{\thefootnote}{\arabic{footnote}}

\begin{abstract}
This paper presents a reproducibility study of ``Boosting the Visual Interpretability of CLIP via Adversarial Fine-Tuning'' by \citet{gong2025boosting}, published at ICLR 2025, which proposes an unsupervised adversarial fine-tuning (AFT) method with norm regularisation to enhance the visual interpretability of CLIP's image encoder. We attempt to reproduce the key claims regarding improved saliency map quality, increased concept alignment, transferability to out-of-distribution datasets, and the trade-off with zero-shot accuracy. Beyond reproduction, we propose a saliency-guided regularisation extension that introduces an Energy Pointing Game loss, directly supervising the spatial alignment of Simple Gradient saliency maps with target objects. We evaluate our extension across a range of saliency-loss weights and show that explicit saliency supervision \rev{substantially} improves localisation metrics 
\rev{at a modest cost: the reduction in zero-shot accuracy and adversarial robustness is small at the lowest saliency-loss weight ($w_s=0.2$) and moderate at the strongest setting ($w_s=0.6$)}. We restructure the codebase for configurable, systematic experimentation. Our reproduction code is available at \url{https://github.com/Andrei3223/Re-Boosting-CLIP-Interpretability}.
\end{abstract}

\section{Introduction}
CLIP (Contrastive Language-Image Pre-training) \citep{radford2021learning} has become a foundational model for vision-language representation learning and is widely used as a backbone in modern multimodal systems. Despite its strong performance, the internal representations of CLIP's image encoder remain difficult to interpret. In particular, common gradient-based attribution methods (e.g., Simple Gradient and Grad-CAM; \citealp{selvaraju2017grad}) often produce noisy and unstable saliency maps, and the encoder itself appears to contain only a limited number of human-interpretable concept detectors.

To understand why these saliency maps are unreliable, it helps to recall how they are constructed. Gradient-based attribution methods quantify the sensitivity of the model's output to small input perturbations. For a model $f_\theta$ with parameters $\theta$, the Simple Gradient saliency map \citep{simonyan2013deep} is defined as $S(x) = |\nabla_x \Ls(f_\theta(x), y)|$, where $\Ls$ is a loss function. However, these gradients in pretrained CLIP models tend to be dense and noisy, distributing attribution broadly across the input rather than focusing on semantically meaningful regions.

\citet{gong2025boosting} address the problem of noisy, diffuse gradients in pretrained CLIP by proposing adversarial fine-tuning (AFT) with norm regularisation, which enforces sparse encoding of visual features. They provide a theoretical analysis showing that AFT implicitly regularises input gradients, leading to more concise and interpretable saliency maps, and their experiments demonstrate significant improvements across multiple interpretability metrics. However, because AFT relies solely on the adversarial objective, it does not explicitly enforce alignment between saliency maps and semantically meaningful regions of the input.

In this work, we conduct a thorough reproducibility study of \citet{gong2025boosting}, verifying their central claims about saliency map quality, concept alignment, transferability, and the accuracy-robustness trade-off, and we extend their methodology with explicit spatial supervision. Our main contributions are:
\begin{itemize}
    \item \textbf{Reproduction of the main claims.} We independently verify the four central claims of the original paper regarding saliency map quality, neuron-concept alignment, transferability to out-of-distribution datasets and downstream tasks, and the trade-off between zero-shot accuracy and adversarial robustness.
    \item \textbf{Implementation of evaluation protocols.} Since the original repository does not release code for a broad range of evaluation strategies, we implement these evaluations from scratch and release them as part of our reproducible codebase.

      \item \textbf{Saliency-guided regularisation extension.} We propose a training objective that augments AFT with an Energy Pointing Game \rev{(EPG) loss, explicitly aligning} Simple Gradient saliency maps with \rev{ground-truth object regions. The proposed Saliency-Guided regularisation consistently improves localisation} while preserving most of \rev{AFT's adversarial robustness. We further show that, to leading order, the EPG objective is equivalent to minimising a background-restricted adversarial objective, providing an adversarial interpretation of explicit saliency supervision (Appendix~\ref{sec:theory-epg}).}

\end{itemize}
\section{Scope of Reproducibility}
\label{sec:claims}

All our experiments aim to reproduce the improvement in model interpretability achieved by training with the proposed AFT method. To this end, we focus on reproducing the following main claims from \citet{gong2025boosting}:

\begin{itemize}
    \item \textbf{Claim 1 (Saliency Map Quality):} AFT significantly improves the quality of Simple Gradient and Grad-CAM saliency maps, as measured by Pointing Game accuracy and related localisation metrics (Table 1, Figure 2a,b in the original paper).

    \item \textbf{Claim 2 (Concept Alignment):} AFT increases neuron-concept alignment in CLIP's visual encoder.

    \item \textbf{Claim 3 (Transferability):} The interpretability improvements transfer to out-of-distribution datasets and downstream tasks, such as linear probing (Figure 2c,d, Figure 3 in the original paper).

    \item \textbf{Claim 4 (Accuracy Trade-off):} AFT maintains reasonable zero-shot classification accuracy while improving interpretability, with only a 2.84\% drop at $\varepsilon=1/255$ (Table 2 in the original paper).
\end{itemize}

Each experiment in Section~\ref{sec:results} is designed to evaluate at least one of these claims.


Beyond these reproduction targets, we evaluate an extension: saliency-guided \rev{regularisation that} aligns Simple Gradient attribution maps with ground-truth masks during adversarial fine-tuning, \rev{testing whether} explicit spatial supervision \rev{improves saliency quality beyond AFT's implicit gradient regularisation} without sacrificing adversarial robustness.

\section{Methodology}

To reproduce the study, we \rev{build on the authors'} publicly available \href{https://github.com/peterant330/CLIP_AFT}{repository}\rev{, which provides training scripts,} PGD \citep{pgd} and APGD \citep{apgd} \rev{attacks, and saliency visualisation utilities. It} does not include code for dataset-level evaluation, \rev{adversarial evaluation, or the localisation metrics (PG}, energy-based PG, \rev{PA, AP, IoU), so we implement these from scratch, along with} the neuron-concept alignment evaluation \rev{of \mbox{
\citet{dissect}}\hskip0pt
, retrieval evaluation on Flickr30k, and ROAR \mbox{
\citep{hooker2019benchmark}}\hskip0pt
. }

\subsection{Unsupervised AFT and Optimization}
The original paper first formulates supervised AFT as a minimax objective over text--image similarity (Equation~\ref{eq:supervised_aft} in Appendix~\ref{appendix:AFT}). Since the text encoder is frozen, the authors derive an image-only upper bound, yielding the unsupervised AFT objective:
\begin{equation}
\min_\theta \E_{x \sim \mathcal{D}_{\text{train}}} \max_{\delta_x}
\left[
\frac{1}{2}
\left\|
\E_{z \sim \mathcal{N}(0,\sigma^2 I)}
[f_\theta(x+z+\delta_x)] - I_x
\right\|^2
- h(\delta_x)
\right],
\label{eq:unsupervised_aft}
\end{equation}
where $f_\theta$ is the trainable visual encoder, $x$ is a clean training image, and $I_x$ is the reference embedding produced by the frozen original CLIP image encoder. The perturbation $\delta_x$ is constrained within an $\ell_\infty$ budget $\varepsilon$, $z \sim \mathcal{N}(0,\sigma^2 I)$ is Gaussian smoothing noise, and $h(\delta_x)$ penalises large perturbations. Thus, the inner maximisation finds a worst-case perturbation that moves the perturbed embedding away from $I_x$, while the outer minimisation trains $f_\theta$ to preserve the reference embedding under this perturbation. We adopt this image-only objective because it is the primary formulation used in the original experiments.


The inner maximisation in Equation~\ref{eq:unsupervised_aft} is approximated with Projected Gradient Descent \rev{\citep{pgd}: starting from $\delta_x = 0$}, the perturbation \rev{takes} $T_{\text{adv}}$ \rev{ascent steps of size $\alpha$, each projected} back onto the $\ell_\infty$ ball of radius $\varepsilon$\rev{, and the final perturbation is} used for the outer update of $\theta$.



\subsection{Hyperparameters}
\label{subsec:params}
\textbf{Model description.}
Following Appendix A.3 of the original paper, we use ViT-B/16 \citep{vit} for all experiments, except for the evaluation with ROAR, which is conducted using the ResNet-50 \citep{resnet} architecture. For the ViT-B/16 baseline, we use OpenCLIP \citep{cherti2023reproducible}. During AFT, only the visual encoder is fine-tuned while the text encoder remains frozen. Due to resource constraints, we focus our reproduction on ViT-B/16, which the original paper uses for the majority of its experiments (ViT-L/14 is used primarily for zero-shot accuracy).

\textbf{Adversarial Fine-Tuning Setup.} 
Following Appendix A.3 of the original paper, we train the ViT-B/16 model with PGD on the ImageNet2012 dataset \citep{imagenet}. We train two separate models with different perturbation strengths, $\varepsilon \in \{1/255, 4/255\}$. The adversarial attack is performed in the $\ell_\infty$ norm with 10 PGD steps and a step size of $1/255$.

Training is carried out for 20,000 iterations with a linear warmup of 1{,}400 steps followed by a cosine learning rate decay. We use the AdamW optimiser with $\beta_1 = 0.9$ and $\beta_2 = 0.95$, a peak learning rate of $1\times10^{-5}$, and a weight decay of $1\times10^{-4}$. The training objective follows the unsupervised AFT formulation in Equation~\ref{eq:unsupervised_aft}, where the embedding-consistency term is implemented with an $\ell_2$ loss; training is performed with a batch size of 128. Following the original paper, we set the Gaussian smoothing parameter $\sigma = 1/255$ and the Huber smoothness parameter $\eta = 1/255$. All experiments are conducted on two NVIDIA A100 GPUs (40\,GB each). 
\rev{On this hardware, the main AFT$^4$ training run took 34.6 hours and consumed 28.7 kWh (an estimated 7.7 kg CO$_2$e), and the full saliency-guided fine-tuning pipeline added roughly 14.9 kWh (3.98 kg CO$_2$e). A complete cost and carbon accounting is given in Appendix~\ref{appendix:environmental_impact}.}

A complete summary of all datasets used in our experiments is provided in Appendix~\ref{appendix:datasets}.

\subsection{Saliency-Guided Regularisation}
\label{sec:methepg}



We introduce an auxiliary loss term that directly supervises the spatial alignment of gradient-based saliency maps with reference object masks. The key idea is to measure the concentration of input-gradient magnitude within annotated foreground regions, penalising the model when saliency mass falls on background features.

\textbf{Saliency construction.} Following \citet{gong2025boosting}, given a clean input image $x$, we compute the Simple Gradient saliency map as the input gradient of the \emph{feature-consistency score}
\[
s = I_{orig}^\top I_x,
\]
the cosine similarity between the trainable embedding $I_x = f_\theta(x)$ and a frozen reference embedding $I_{orig} = f_{orig}(x)$ from OpenAI CLIP (both unit-normalised). We then compute the input gradient $g = \nabla_x s$ using \texttt{create\_graph=True}, ensuring the gradient remains part of the differentiable computational graph. This enables the calculation of second-order derivatives during the backward pass of the saliency loss. 


The final saliency map \rev{$S = \sum_{c=1}^{C} |g_c|$ aggregates} absolute gradient magnitudes across the \rev{$C = 3$ input channels, giving a heatmap of} shape \((B, H, W)\).


\textbf{Energy Pointing Game loss (EPG-loss).} We first normalise \rev{each} saliency map into a spatial probability distribution \rev{$\hat{S}_i(p) = S_i(p) / \sum_{p'} S_i(p')$ over} the $H \times W$ \rev{grid}. Let $M_i \subseteq \{1,\dots,H\}\times\{1,\dots,W\}$ be the binarised foreground mask of image $i$ from COCO instance annotations. The EPG loss is then

\[
\mathcal{L}_{\text{EPG}} = 1 - \frac{1}{B} \sum_{i=1}^{B} \sum_{p \in M_i} \hat{S}_i(p).
\]
By construction, \(\mathcal{L}_{\text{EPG}} \in [0,1]\): \(\mathcal{L}_{\text{EPG}}=0\) indicates that all saliency mass is concentrated inside the target mask, whereas \(\mathcal{L}_{\text{EPG}}=1\) indicates that no saliency mass falls within it.

\textbf{Integration into training.} The full training objective is
\[
\mathcal{L}_{\text{total}} =
\mathcal{L}_{\text{adv}}
+ w_s \mathcal{L}_{\text{EPG}},
\]
where \(w_s\) controls the contribution of the saliency regularisation term. Since \(\mathcal{L}_{\text{EPG}}\) is added independently of the clean/adversarial trade-off, its strength can be tuned separately, and setting \(w_s = 0\) recovers the standard adversarial training objective. 
\rev{Appendix~\ref{sec:theory-epg} shows that, to first order in the perturbation size, minimising $\mathcal{L}_{\text{EPG}}$ enforces robustness to background-restricted adversarial perturbations, a spatially localised case of the duality behind AFT.}

\textbf{Data.}
We utilise the COCO dataset \citep{lin2015microsoftcococommonobjects}, specifically leveraging its high-quality instance-level segmentation masks to provide ground-truth supervision for the saliency maps. Both images and their corresponding binary masks are resized to $224 \times 224$ pixels to match the input resolution of the CLIP vision transformer. 

While the original AFT formulation of \citet{gong2025boosting} is entirely unsupervised, our saliency-guided extension introduces a supervised component by explicitly aligning the model's gradient-based attention with human-annotated foreground regions. 

\textbf{Training configuration.} We found that training with the saliency loss from scratch does not converge. 
Before AFT has converged, input gradients are dense and near-uniform, so the second-order signal that drives $\nabla_\theta \mathcal{L}_{\text{EPG}}$ has effectively random sign across pixels and produces high-variance updates. We therefore apply the saliency loss only as a fine-tuning stage on top of an already-converged checkpoint. Specifically, starting from an AFT$^4$ checkpoint trained for \(5{,}000\) steps on COCO, we fine-tune for \(2{,}000\) additional steps with a learning rate of \(5 \times 10^{-6}\) and a warmup of \(100\) steps. To study the sensitivity of the method to the strength of the saliency regulariser, we sweep \(w_s \in \{0, 0.2, 0.4, 0.6\}\). All configurations use a batch size of \(128\), which fits on \(40\,\mathrm{GB}\) A100 GPUs.

\rev{Unlike AFT, which needs only images, the EPG loss requires dense instance masks and a second-order backward pass through the input gradient, which raises the step time to $1.27\times$ that of plain AFT fine-tuning in our runs. The extension's main practical cost is the supervision requirement: it restricts training to mask-annotated data such as COCO, whereas AFT trains on any image collection. Weaker supervision (boxes, or pseudo-masks from a segmenter) would broaden the usable training data and is left to future work.
}
\section{Results}
\label{sec:results}

\subsection{Results Reproducing Original Paper}

\subsubsection{Saliency Map Quality (Claim 1)}
\label{sec:claim1}

\textbf{Qualitative Analysis.} We reproduced the qualitative saliency map comparison from the original paper \citep{gong2025boosting} (Figure 2a,b), using our retrained ViT-B/16 model with $\varepsilon = 4/255$. Following the original methodology, we selected images from the COCO validation set \citep{lin2015microsoftcococommonobjects} containing multiple objects and generated saliency maps using different text prompts.

\begin{figure}[h]
\centering
\includegraphics[width=0.68\textwidth]{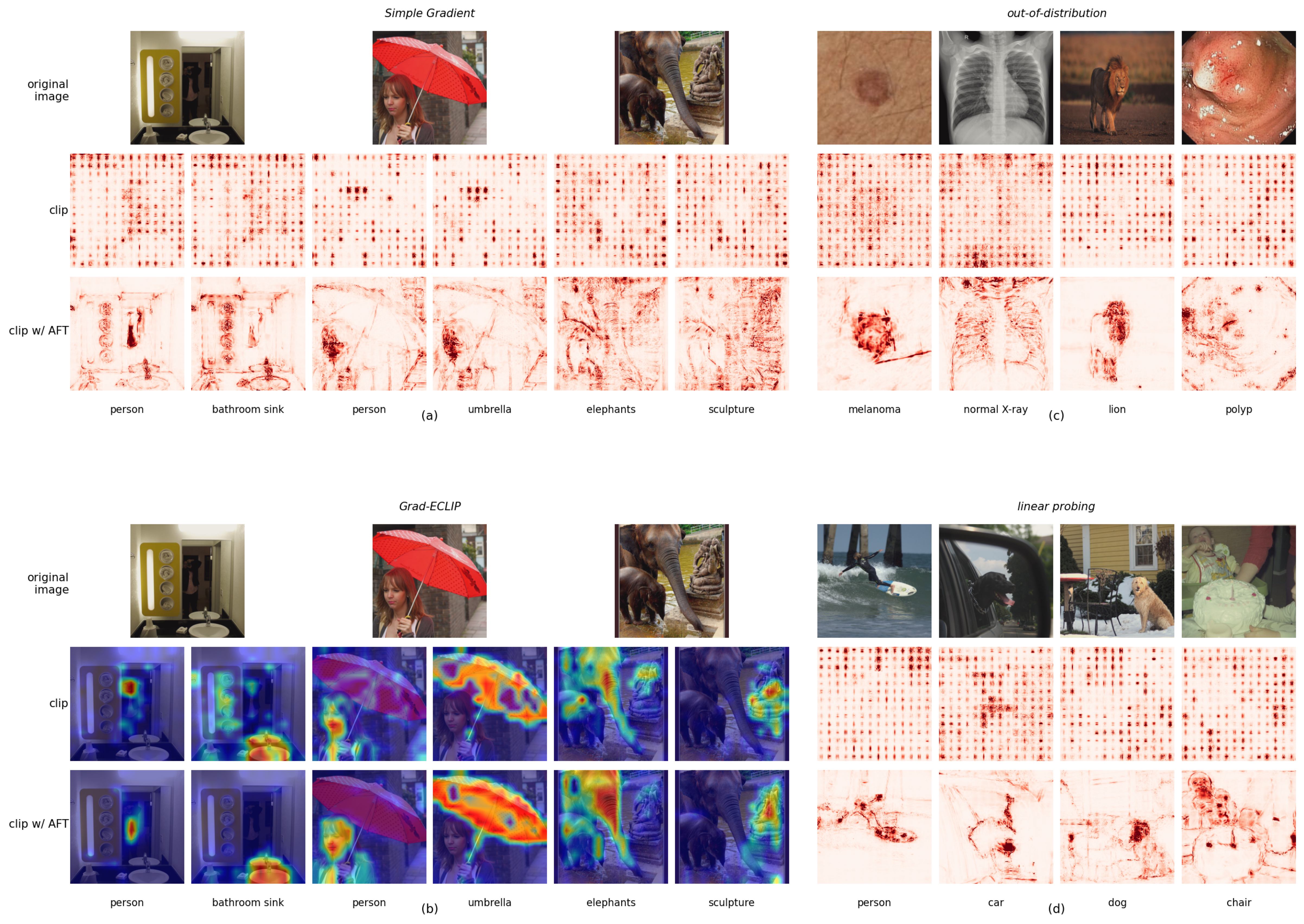}
\caption{Saliency map comparison between CLIP (middle row) and CLIP with AFT (bottom
 row). (a) Simple Gradient maps for different text prompts. (b) Grad-ECLIP visualisations for the same images. (c) Out-of-distribution evaluation on medical and wildlife images. (d) Linear probing results on COCO classification.}
\label{fig:gradient_maps}
\end{figure}

Figure~\ref{fig:gradient_maps}(a) shows Simple Gradient saliency maps for three image-prompt pairs. The original CLIP model produced dense, noisy gradients distributed broadly across the input, while CLIP with AFT generated substantially cleaner maps that concentrated on semantically relevant regions. We additionally evaluated Grad-ECLIP~\citep{zhao2024gradient}, a CLIP-tailored variant of class activation mapping, shown in Figure~\ref{fig:gradient_maps}(b). Consistent with the Simple Gradient results, AFT produced sharper, prompt-sensitive maps that localised on the referenced object, while the baseline CLIP maps were diffuse and less prompt-sensitive. These visualisations closely resemble the qualitative results reported in the original paper and show the benefit of AFT. We also ran standard Grad-CAM for completeness and found the results inconsistent: \rev{the maps collapse into a few blocky high-intensity patches whose placement varies between images. Running the identical pipeline on an ImageNet-pretrained ResNet-50 yields smooth, object-centred maps (Appendix~\ref{appendix:gradcam_sanity}), so the instability comes from applying Grad-CAM's channel-averaging heuristic to the coarse $14\times14$ token grid of a contrastively trained ViT, not from our implementation. Quantitatively, Grad-CAM localisation still improves under AFT (Table~\ref{tab:localization}); given the instability of the underlying maps, we treat the Grad-CAM half of Claim 1 as inconclusive, neither confirming nor contradicting the original result.} Overall, the qualitative analysis \rev{supports the Simple Gradient component of Claim 1, with Grad-ECLIP providing convergent evidence.}

\textbf{Quantitative Analysis.} 
We assess performance quantitatively using five localisation metrics, all computed against ground-truth segmentation masks: Pointing Game (PG), which measures whether the argmax saliency pixel falls inside the object mask; Energy Pointing Game (E-PG) \citep{wang2020score}, the fraction of total saliency mass contained within the mask; Average Precision (AP), the area under the precision-recall curve treating saliency values as soft predictions; Pixel Accuracy (PA) and Intersection-over-Union (IoU), computed on binarised saliency maps. Several protocol details are not formally documented in the original work; \rev{our choices are described below and stress-tested in Appendix~\ref{appendix:protocol_sensitivity}.}

\begin{table*}[!h]
\centering
\caption{Evaluation of localisation ability using PG \citep{zhang2018top}, E-PG \citep{wang2020score}, and segmentation metrics (PA, AP and Mask IoU) on the ImageNet-Segmentation validation dataset. \rev{Absolute values differ from those reported by \citet{gong2025boosting} owing to undocumented evaluation-protocol choices (Appendix~\ref{appendix:protocol_sensitivity}); see text.}}
\label{tab:localization}
\resizebox{0.85\textwidth}{!}{
\begin{tabular}{ll|ccccc|ccccc}
\hline
\multirow{2}{*}{Saliency} & \multirow{2}{*}{CLIP} 
& \multicolumn{5}{c|}{Zero-shot classification} 
& \multicolumn{5}{c}{Linear probing} \\
& 
& PG$\uparrow$ & E-PG$\uparrow$ & AP$\uparrow$ & PA$\uparrow$ & IoU$\uparrow$
& PG$\uparrow$ & E-PG$\uparrow$ & AP$\uparrow$ & PA$\uparrow$ & IoU$\uparrow$ \\
\hline
 & w/o AFT
& \rev{25.26} & \rev{29.21} & \rev{31.65} & \rev{61.15} & \rev{12.19}
& \rev{28.05} & \rev{30.95} & \rev{33.10} & \rev{61.96} & \rev{13.37} \\
SG & w/ AFT$^{\rev{4}}$ (ours)
& \rev{56.55} & \rev{\textbf{46.22}} & \rev{49.95} & \rev{68.71} & \rev{24.43}
& \rev{57.83} & \rev{\textbf{46.99}} & \rev{50.67} & \rev{68.97} & \rev{24.87} \\
 & w/ AFT$^{\rev{4}}$ (original)
& \rev{\textbf{56.61}} & \rev{46.09} & \rev{\textbf{49.98}} & \rev{\textbf{68.75}} & \rev{\textbf{24.47}}
& \rev{\textbf{57.87}} & \rev{46.84} & \rev{\textbf{50.68}} & \rev{\textbf{69.00}} & \rev{\textbf{24.90}} \\
\hline
 & w/o AFT
& \rev{26.56} & \rev{27.16} & \rev{31.58} & \rev{59.29} & \rev{10.53}
& \rev{67.10} & \rev{59.45} & \rev{61.56} & \rev{\textbf{69.97}} & \rev{31.46} \\
GC & w/ AFT$^{\rev{4}}$ (ours)
& \rev{57.57} & \rev{49.59} & \rev{54.22} & \rev{\textbf{67.29}} & \rev{26.35}
& \rev{\textbf{68.68}} & \rev{61.39} & \rev{\textbf{64.81}} & \rev{68.90} & \rev{\textbf{32.85}} \\
 & w/ AFT$^{\rev{4}}$ (original)
& \rev{\textbf{59.57}} & \rev{\textbf{52.09}} & \rev{\textbf{55.51}} & \rev{67.02} & \rev{\textbf{27.30}}
& \rev{68.16} & \rev{\textbf{61.41}} & \rev{64.35} & \rev{68.70} & \rev{32.58} \\
\hline
 & w/o AFT
& \rev{\textbf{67.37}} & \rev{52.00} & \rev{63.66} & \rev{72.92} & \rev{32.58}
& -- & -- & -- & -- & -- \\
Grad-ECLIP & w/ AFT$^{\rev{4}}$ (ours)
& \rev{66.23} & \rev{\textbf{54.01}} & \rev{\textbf{65.73}} & \rev{\textbf{73.22}} & \rev{\textbf{33.53}}
& -- & -- & -- & -- & -- \\
 & w/ AFT$^{\rev{4}}$ (original)
& \rev{65.87} & \rev{53.67} & \rev{65.30} & \rev{73.11} & \rev{33.28}
& -- & -- & -- & -- & -- \\
\hline
\end{tabular}
}
\end{table*}

Following the original protocol, we evaluate models on the intersection of the ImageNet validation set and ImageNet-Segmentation \citep{Gao_2023} (12,419 images). As shown in Table~\ref{tab:localization}, our retrained AFT$^4$ model closely matches the author-released checkpoint across both zero-shot classification and linear probing \rev{(within 0.3 points on every Simple Gradient metric, with a stable offset of about 2.0 PG points on Grad-CAM)}, confirming model-level reproducibility. However, our absolute values exhibit a uniform shift \rev{against} \citet{gong2025boosting} (e.g., a baseline Mask IoU of \rev{12.19} versus their 2.16); because our retrained model and the released checkpoint \rev{score nearly identically, the shift is most plausibly} a protocol artefact of evaluation choices omitted in the original paper. \rev{Appendix~\ref{appendix:protocol_sensitivity} sweeps the main candidates: binarisation threshold, pointing-game hit radius, mask-downsampling mode, and percentile clipping. These choices change absolute metric levels, but do not reorder the models within any saliency method. Under AFT}, Simple Gradient PG accuracy \rev{more than doubles (25.26 to 56.55)}, E-PG \rev{rises by more than half}, and Average Precision improves \rev{by a similar margin; Grad-CAM behaves alike. Grad-ECLIP is the exception: its pointing game sits at parity between the AFT models and the baseline (66.23 versus 67.37). We therefore consider the Simple Gradient component of Claim 1 reproduced directionally, while the absolute values remain protocol-dependent and do not numerically match \citet{gong2025boosting}.}

\subsubsection{Concept Alignment (Claim 2)}
\label{sec:concept}
To quantitatively evaluate the interpretability of the visual encoder, we reproduced a network dissection experiment following CLIP-Dissect \citep{dissect}, \rev{identifying neuron}s that selectively respond to human-understandable concepts.

\textbf{Experimental Setup.}
\rev{We define} neurons as the activation channels of the [CLS] token \rev{at the final ViT layer and probe them on the Broden~\citep{bau2017network} dataset}: for each neuron $k$, we record its activation $A_k$ across probing images and measure alignment with a concept similarity array $S_c$ generated by a pre-trained CLIP model; \rev{a neuron counts} as a detector for \rev{the concept with which it aligns most strongly.}

While the authors mentioned using the 20,000 most common English words, they did not specify any filtering methods or provide the exact list of words used. To address this and ensure reproducibility, we utilised the \texttt{wordfreq} \citep{wordfreq} library to extract the top 20,000 most frequent English words.

While the original study reports a single alignment metric, \rev{we benchmark three: Soft-WPMI} (\rev{the CLIP-Dissect default, which favours semantically specific concepts and penalises frequent terms}), \rev{cosine similarity, and cubed cosine similarity, a variant that amplifies peak activations and surfaces sparse detectors.}

\begin{table}[h]
\centering
\caption{Mean and median neuron-concept alignment scores for different AFT strengths.}
\label{tab:concept4}
\small
\begin{tabular}{lcccccc}
\toprule
& \multicolumn{2}{c}{w/o AFT} & \multicolumn{2}{c}{w/ AFT$^1$} & \multicolumn{2}{c}{w/ AFT$^4$} \\
Metric & Mean & Median & Mean & Median & Mean & Median \\
\midrule
Soft-WPMI & \rev{0.2264} & \rev{0.2023} & \rev{0.2610} & \rev{0.2307} & \rev{\textbf{0.2817}} & \rev{\textbf{0.2558}} \\
Cos & \rev{0.2852} & \rev{0.4437} & \rev{0.3046} & \rev{0.4726} & \rev{\textbf{0.3065}} & \rev{\textbf{0.4996}} \\
Cos-cubed & \rev{0.2369} & \rev{0.2205} & \rev{0.3051} & \rev{0.2853} & \rev{\textbf{0.3289}} & \rev{\textbf{0.3124}} \\
\bottomrule
\end{tabular}
\end{table}

\textbf{Results.} As shown in Table~\ref{tab:concept4}, AFT consistently improves neuron-concept alignment across all three similarity metrics, with mean and median scores rising monotonically with stronger AFT (\rev{full score distributions in Figure~\ref{fig:alignment_hist}, Appendix~\ref{appendix:nd_ablation}).}

We also apply the original Network Dissection protocol~\citep{bau2017network} to the final visual token of CLIP's image encoder (768 neurons, $14\times14$ spatial grid) \rev{using the same Broden dataset}. For each neuron, we estimate the top-0.5\% activation threshold, upsample the binarised activation maps to input resolution, and compute IoU with each concept's segmentation mask. A neuron counts as a detector when its best-matching concept exceeds $\text{IoU} \geq 0.04$.

\begin{table}[h]
\centering
\caption{Network Dissection on the Broden dataset (768 neurons, $\text{IoU} \geq 0.04$ threshold).}
\label{tab:net_dissect}
\small
\begin{tabular}{lccc}
\toprule
Model & Detectors & \% neurons \\
\midrule
CLIP w/o AFT & \rev{160/768} & \rev{20.8\%} \\
AFT$^1$ (ours) & \rev{279/768} & \rev{36.3\%} \\
AFT$^4$ (ours) & \rev{390/768} & \rev{50.8\%} \\
AFT$^4$ (original) & \rev{390/768} & \rev{50.8\%} \\
\bottomrule
\end{tabular}
\end{table}

AFT$^4$ more than doubles the number of concept detectors relative to the baseline (160 to 390 of 768), and our retrained AFT$^4$ matches the released checkpoint exactly at this operating point. The absolute counts differ \rev{from those reported in the original paper (42 to 159)}, likely due to unspecified details in their Network Dissection setup. \rev{To test how far the conclusion depends on such details, Appendix~\ref{appendix:nd_ablation} repeats the dissection with the concept vocabulary, activation quantile, IoU threshold, and probed layer exposed as configuration. The ordering AFT$^4 > $ AFT$^1 > $ CLIP holds in every ablation cell, while the absolute count is governed almost entirely by the IoU threshold: the baseline count falls from 572 detectors at IoU 0.02 to 64 at IoU 0.06, and both the original counts and ours fall inside the range this sweep spans, so the count gap is a threshold artefact, not a modelling difference.} We therefore view Claim~2 as reproduced in trend, but not quantitatively matched.

\subsubsection{Transferability (Claim 3)}

\textbf{Experimental Setup.} We evaluated the transferability of interpretability improvements to both out-of-distribution (OOD) domains and downstream classification tasks. Our experiments reproduced the settings of Figures 2(c,d) from the original paper.

For the OOD evaluation in Figure~\ref{fig:gradient_maps}(c), we tested the models on four datasets from different visual domains: ISIC 2024 (dermatology), Chest X-ray (radiology), Wildlife Animals (natural photography), and Kvasir-SEG (colonoscopy), using domain-specific prompts such as ``a dermoscopy image of melanoma''. For the downstream evaluation (Figure~\ref{fig:gradient_maps}(d)), we trained linear probes \rev{on frozen features} for a 6-class \rev{COCO classification task} and generated Simple Gradient maps for the resulting classifiers.

\textbf{Results.} \rev{Qualitatively, the AFT model produced clearer, more localised saliency maps than baseline CLIP across the OOD domains, most clearly on ISIC and the chest X-rays, and the improvement persisted for the linear probes.}

\rev{Table~\ref{tab:ood_loc} quantifies this, scoring localisation against ground-truth masks in three settings of Figure~\ref{fig:gradient_maps}(c,d): zero-shot on ISIC and Kvasir-SEG, and the 6-class COCO linear probe (1,000 samples per setting, protocol of Section~\ref{sec:claim1}; full five-metric tables in Appendix~\ref{appendix:ood_loc_full}). The vanilla-CLIP row gives the direct quantitative test of Claim~3. AFT roughly quadruples Simple Gradient PG on ISIC (0.107 to 0.428) and more than doubles it on the COCO probe (0.140 to 0.333). On Kvasir-SEG every ImageNet-fine-tuned arm stays near the floor (Simple Gradient PG at or below 0.072), so the small polyp targets provide little evidence either way. The transfer evidence is thus strong in two of three settings and absent in the third. Our plain-AFT reproductions match the released checkpoint in all three settings, extending the model-level reproducibility of Section~\ref{sec:claim1} to OOD and downstream data. The saliency-guided extension of Section~\ref{sec:sgr} also transfers: Simple Gradient and Grad-ECLIP improve monotonically with the EPG weight on every dataset (ISIC Simple Gradient PG 0.428 to 0.820), even though Grad-ECLIP is never optimised by the EPG loss, and the EPG models are the only ones that move off the Kvasir-SEG floor (Simple Gradient PG up to 0.265). Grad-CAM responds to the dose only on ISIC and stays flat in the other settings, consistent with its instability in Section~\ref{sec:claim1}. The EPG rows build on AFT$^4$ (ours), so they isolate the extension's effect on top of AFT; being COCO-fine-tuned, they hold an in-domain advantage on the COCO probe.}

\begin{table}[h]
\centering
\caption{\rev{Localisation against ground-truth masks in the transfer settings of Figure~\ref{fig:gradient_maps}(c,d): zero-shot on ISIC and Kvasir-SEG, and the 6-class COCO linear probe. 1,000 samples per setting, seed 42, protocol of Section~\ref{sec:claim1}. SG = Simple Gradient, GE = Grad-ECLIP, GC = Grad-CAM; Grad-ECLIP is not defined for the linear probe, so the COCO columns report Grad-CAM. Full five-metric tables in Appendix~\ref{appendix:ood_loc_full}. Bold marks the best value per column; italics mark the best among the ImageNet-only arms.}}
\label{tab:ood_loc}
\resizebox{0.88\textwidth}{!}{
{\color{revcolor}%
\begin{tabular}{l|ccc|ccc|ccc}
\toprule
& \multicolumn{3}{c|}{ISIC (zero-shot)} & \multicolumn{3}{c|}{Kvasir-SEG (zero-shot)} & \multicolumn{3}{c}{COCO linear probe} \\
Model & SG PG & GE PG & SG IoU & SG PG & GE PG & SG IoU & SG PG & GC PG & SG IoU \\
\midrule
\multicolumn{10}{l}{\textit{ImageNet fine-tuning only}} \\
CLIP (w/o AFT)      & 0.107 & 0.397 & 0.091 & 0.058 & \textit{0.085} & 0.062 & 0.140 & 0.226 & 0.076 \\
AFT$^4$ (original)  & 0.428 & 0.460 & \textit{0.282} & 0.051 & 0.020 & 0.121 & \textit{0.333} & \textbf{\textit{0.244}} & \textit{0.145} \\
AFT$^1$ (ours)      & \textit{0.457} & \textit{0.469} & 0.249 & \textit{0.072} & 0.043 & \textit{0.137} & 0.324 & 0.230 & 0.139 \\
AFT$^4$ (ours)      & 0.424 & 0.426 & 0.274 & 0.056 & 0.028 & 0.127 & 0.328 & 0.239 & 0.144 \\
\midrule
\multicolumn{10}{l}{\textit{COCO fine-tuning with EPG supervision}} \\
\;+ EPG $w_s{=}0.2$ & 0.648 & 0.750 & 0.350 & 0.113 & 0.037 & 0.143 & 0.387 & 0.218 & 0.166 \\
\;+ EPG $w_s{=}0.4$ & 0.763 & 0.852 & 0.383 & 0.180 & 0.086 & 0.166 & 0.430 & 0.226 & 0.181 \\
\;+ EPG $w_s{=}0.6$ & \textbf{0.820} & \textbf{0.905} & \textbf{0.407} & \textbf{0.265} & \textbf{0.187} & \textbf{0.197} & \textbf{0.467} & 0.236 & \textbf{0.193} \\
\bottomrule
\end{tabular}}
}
\end{table}

\subsubsection{Accuracy Trade-off (Claim 4)}
\label{sec:claim4}

The current experiment evaluates whether the gains in interpretability and robustness come at a high cost to the model's standard performance.

\paragraph{Experimental setup.} We compare standard CLIP, the checkpoint provided by the original paper (AFT$^4$ original), and our two models with $\varepsilon = 1/255$ and $\varepsilon = 4/255$, respectively. We evaluate the models' zero-shot accuracy on 1,000 random samples per dataset, drawn from both in-distribution and out-of-distribution datasets. \rev{The subset is selected with a fixed random seed (42), so all models are evaluated on the same reproducible sample. Classification uses the fixed prompt template ``A photo of a \{\}.'' with each dataset's default class names, encoded once by the frozen text encoder; zero-shot logits are cosine similarities between image and class-text embeddings, scaled by CLIP's learned logit scale. Adversarial evaluation follows the AutoAttack recipe \citep{apgd} with two-stage APGD (untargeted cross-entropy, then targeted DLR on surviving samples) under $L_{\infty}$ budgets of $2/255$ and $4/255$; the full attack configuration is in Appendix~\ref{appendix:seed_cis}.}

\begin{table}[ht]
\centering
\caption{Adversarial Robustness and Accuracy Results. (ours) denotes retrained models; (original) denotes original checkpoints provided by authors. \rev{Imagenette values are means over five evaluation seeds; all other columns use the fixed seed 42, except the COCO AFT ($w_s{=}0$) row, which reports five-seed means throughout. Per-cell 95\% confidence intervals for every dataset are given in Appendix~\ref{appendix:seed_cis}.}}
\label{tab:reproduction_results_complete}
\resizebox{0.72\textwidth}{!}{
\begin{tabular}{llcccccc}
\toprule
$L_\infty$ & \textbf{Model} & \textbf{CIFAR-10} & \textbf{STL-10} & \textbf{EuroSAT} & \textbf{FGVC Air.} & \textbf{Imagenette} & \textbf{Average} \\
\midrule

\multirow{5}{*}{\rotatebox[origin=c]{90}{clean}} 
& CLIP               & \rev{\textbf{89.4}} & \rev{\textbf{98.1}} & \rev{\textbf{40.5}} & \rev{\textbf{22.0}} & \rev{\textbf{99.58}} & \rev{\textbf{69.92}} \\
& AFT$^1$ (ours)     & \rev{86.4} & \rev{97.7} & \rev{9.5}   & \rev{21.4} & \rev{99.26} & \rev{62.85} \\
& AFT$^4$ (ours)     & \rev{66.2} & \rev{91.4} & \rev{10.5}  & \rev{12.2} & \rev{98.60} & \rev{55.78} \\
& AFT$^4$ (original) & \rev{63.2} & \rev{90.4} & \rev{11.9} & \rev{12.0} & \rev{98.22} & \rev{55.14} \\
& COCO AFT ($w_s{=}0$) & \rev{55.7} & \rev{83.4} & \rev{8.3} & \rev{13.7} & \rev{95.20} & \rev{51.26} \\

\midrule

\multirow{5}{*}{\rotatebox[origin=c]{90}{2/255}} 
& CLIP               & \rev{0.0}  & \rev{0.0}  & \rev{0.0}  & \rev{0.0} & \rev{0.00} & \rev{0.00} \\
& AFT$^1$ (ours)     & \rev{20.3} & \rev{56.1} & \rev{0.6}  & \rev{1.8} & \rev{83.38} & \rev{32.44} \\
& AFT$^4$ (ours)     & \rev{\textbf{44.9}} & \rev{\textbf{79.6}} & \rev{10.1} & \rev{3.2} & \rev{\textbf{94.52}} & \rev{\textbf{46.46}} \\
& AFT$^4$ (original) & \rev{41.6} & \rev{77.2} & \rev{\textbf{11.9}} & \rev{\textbf{3.3}} & \rev{94.08} & \rev{45.62} \\
& COCO AFT ($w_s{=}0$) & \rev{35.4} & \rev{64.7} & \rev{2.2} & \rev{4.5} & \rev{83.66} & \rev{38.07} \\

\midrule

\multirow{5}{*}{\rotatebox[origin=c]{90}{4/255}} 
& CLIP               & \rev{0.0}  & \rev{0.0}  & \rev{0.0}  & \rev{0.0} & \rev{0.00} & \rev{0.00} \\
& AFT$^1$ (ours)     & \rev{0.2}  & \rev{6.1}  & \rev{0.0}  & \rev{0.0} & \rev{23.32} & \rev{5.92} \\
& AFT$^4$ (ours)     & \rev{\textbf{24.6}} & \rev{\textbf{58.4}} & \rev{10.0} & \rev{1.1} & \rev{\textbf{86.14}} & \rev{\textbf{36.05}} \\
& AFT$^4$ (original) & \rev{22.4} & \rev{57.4} & \rev{\textbf{11.7}} & \rev{\textbf{1.2}} & \rev{84.54} & \rev{35.45} \\
& COCO AFT ($w_s{=}0$) & \rev{15.3} & \rev{38.2} & \rev{0.6} & \rev{1.2} & \rev{59.64} & \rev{22.99} \\

\bottomrule
\end{tabular}
}
\end{table}






\paragraph{Reproducibility.} \rev{We repeated the evaluation with five seeds, drawing a fresh 1,000-sample subset each time; the attack is deterministic, so the spread is pure sample-selection variance. The 95\% confidence intervals stay within $\pm 2.42$ accuracy points across all checkpoint, dataset, and budget combinations (Appendix~\ref{appendix:seed_cis}), so the fixed-seed values above are stable and the trend holds under every seed.}

\rev{At $\varepsilon=1/255$ the clean-accuracy cost matches the original report on CIFAR-10, STL-10, and Imagenette (at most 3 points), but the average drop is 7.1 points, driven by EuroSAT, where all AFT models fall to the 10-class chance level on every seed: zero-shot transfer to satellite imagery does not survive ImageNet fine-tuning. We consider Claim~4 reproduced on natural-image benchmarks, while the ``reasonable accuracy'' framing does not extend to every domain.}

\subsubsection{Remove and Retrain}
\label{sec:roar}
To prove that saliency maps after AFT highlight the task-related regions of the input image, the authors of the original paper use the Remove and Retrain (ROAR) method \citep{hooker2019benchmark}. ROAR measures how much the actual information content of the dataset decreases by forcing a new model to learn from the modified masked data.

\paragraph{Experimental setup.} First, we generate static saliency maps for the entire training and validation sets of Caltech256 \citep{caltech} and Imagenette \citep{Imagenette}, once with the reference AFT$^4$ model and once with standard CLIP. We then construct multiple masked versions of each dataset by removing the top $k \in \{25\%, 50\%, 75\%, 95\%\}$ most salient pixels, as identified by the saliency method. \rev{Removal uses a zero fill: the image is multiplied element-wise with a binary mask that zeroes the selected pixels.} For each masking level and saliency source, a ResNet-50 is randomly initialised and trained from scratch on the corresponding masked training set, \rev{using the cross-entropy loss and Adam with a learning rate of $10^{-3}$, a batch size of 25, and 10 epochs. Inputs are resized and centre-cropped to $224 \times 224$; no data augmentation is applied. We report the retrained network's classification accuracy on the correspondingly masked validation set.}

\begin{figure}[t]
    \centering
    \begin{subfigure}[t]{0.24\textwidth}
        \centering
        \includegraphics[width=\linewidth]{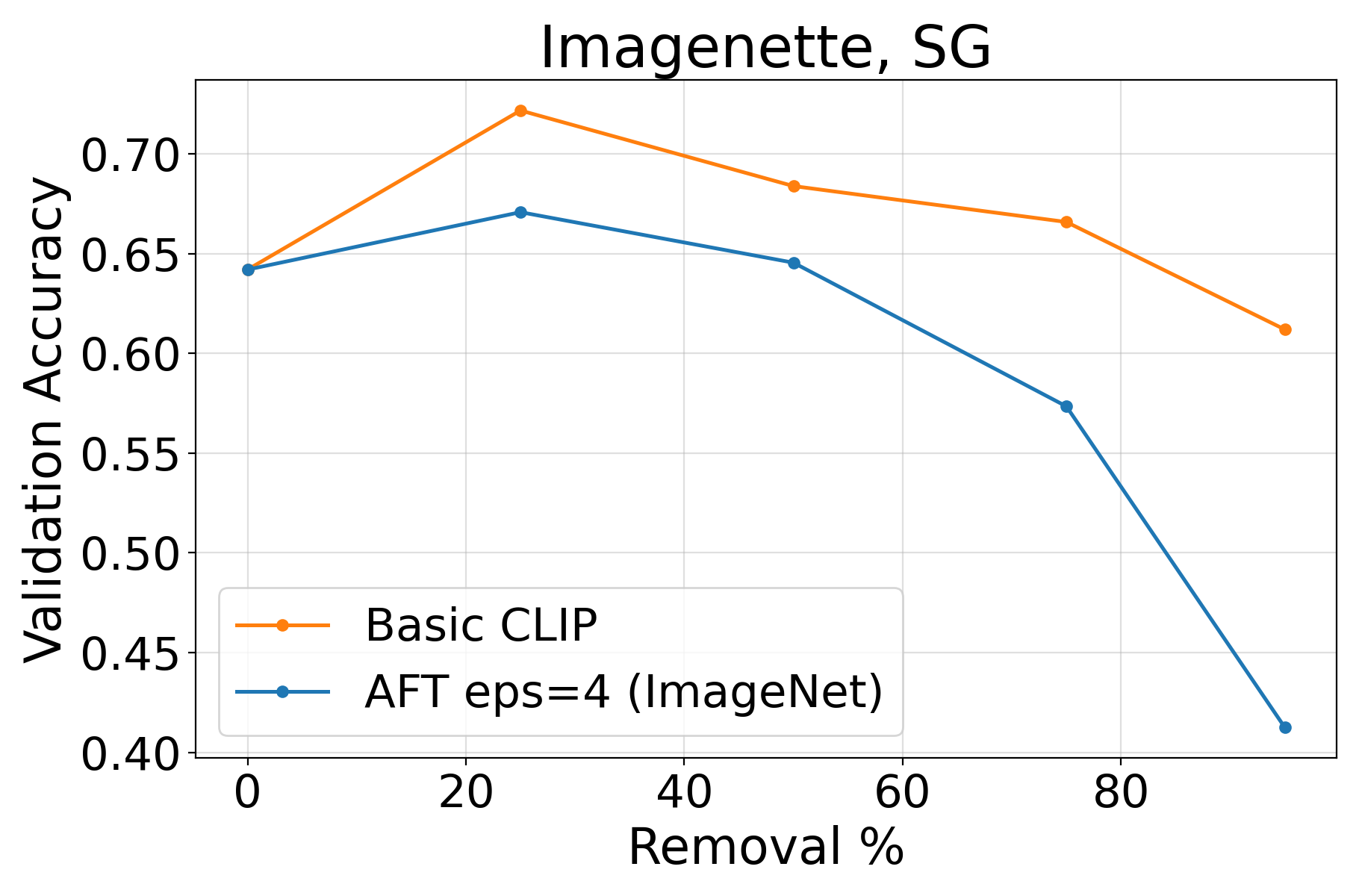}
        \caption{}
    \end{subfigure}
    \hfill
    \begin{subfigure}[t]{0.24\textwidth}
        \centering
        \includegraphics[width=\linewidth]{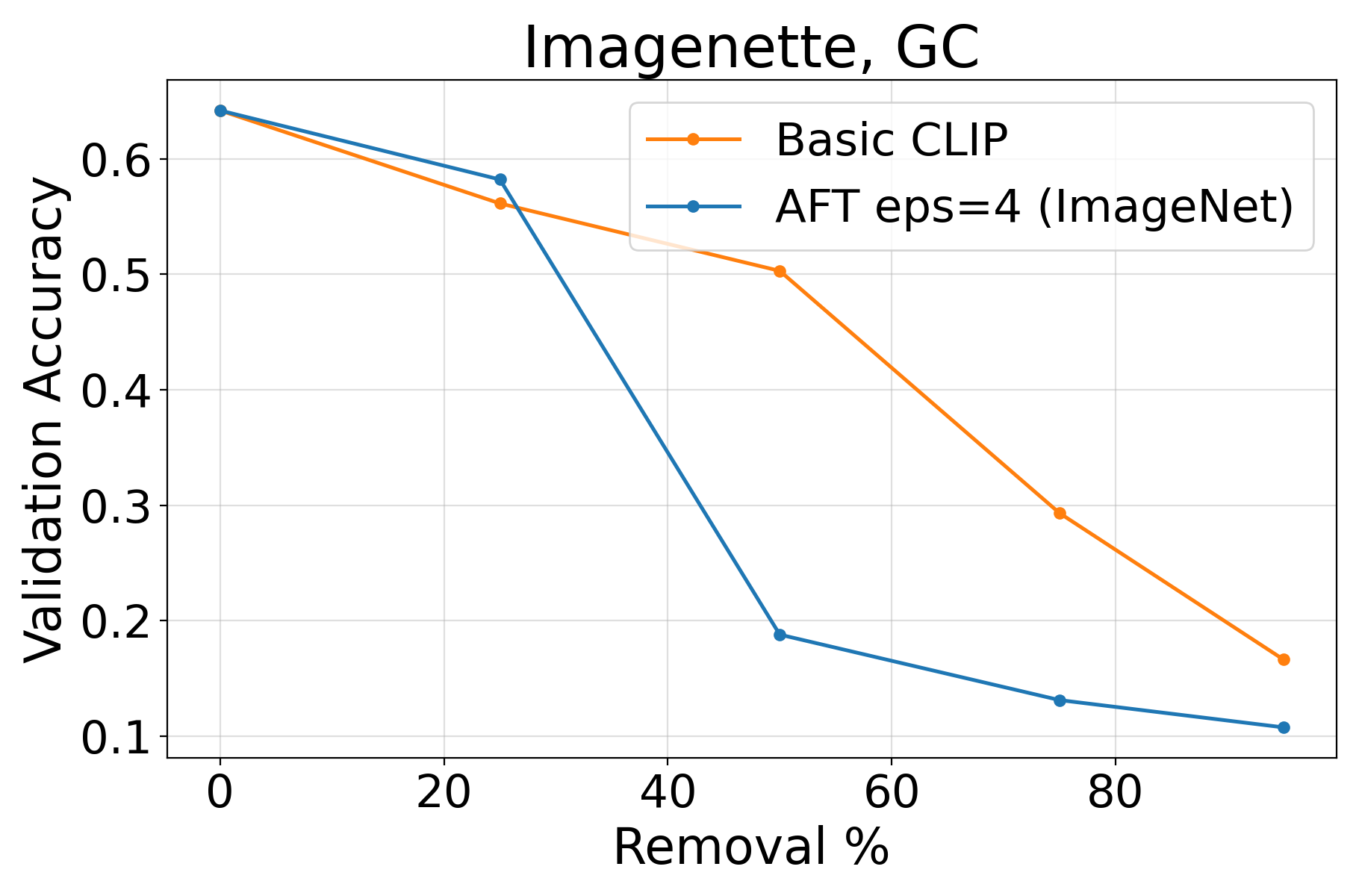}
        \caption{}
    \end{subfigure}
    \hfill
    \begin{subfigure}[t]{0.24\textwidth}
        \centering
        \includegraphics[width=\linewidth]{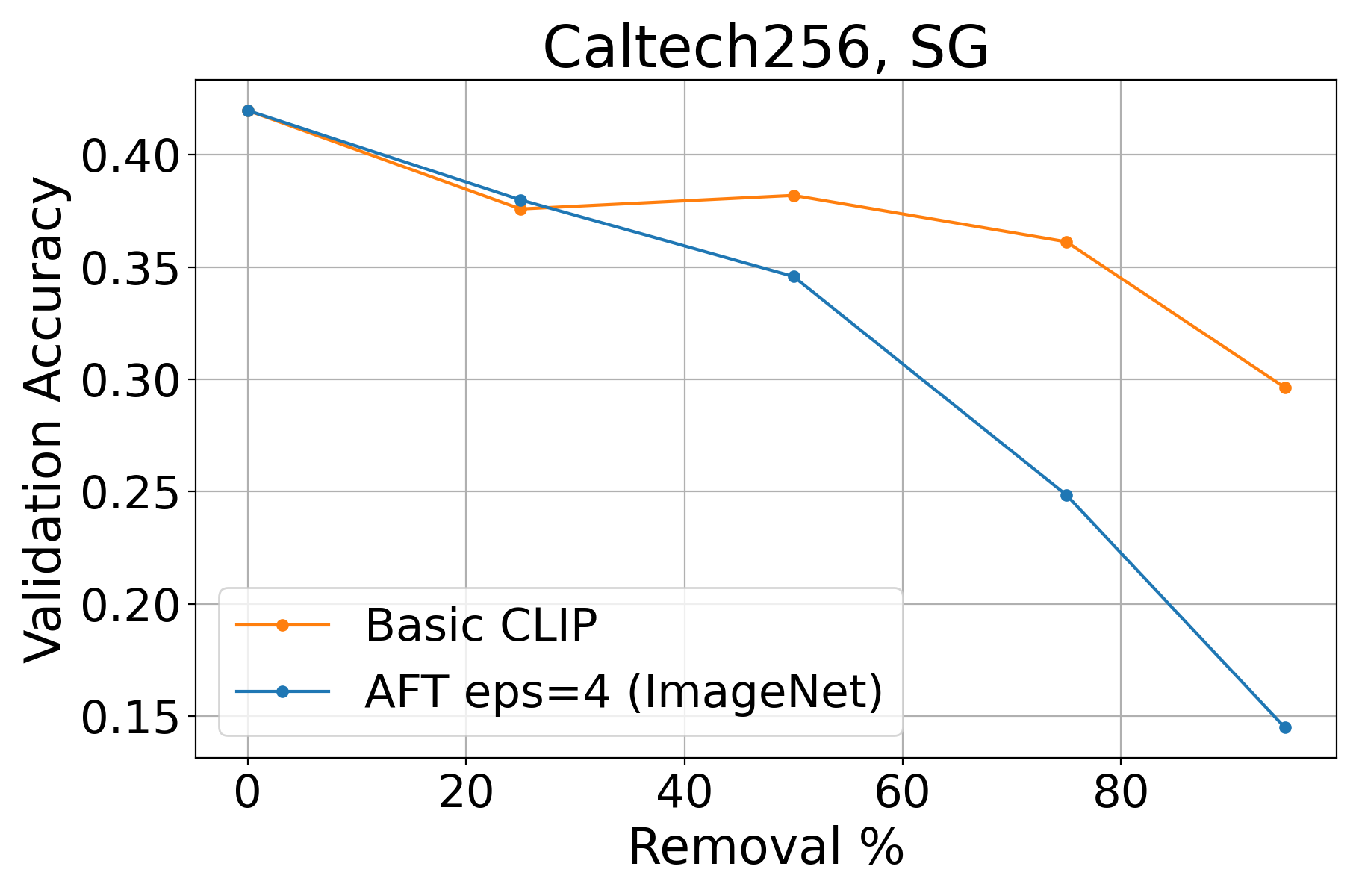}
        \caption{}
    \end{subfigure}
    \hfill
    \begin{subfigure}[t]{0.24\textwidth}
        \centering
        \includegraphics[width=\linewidth]{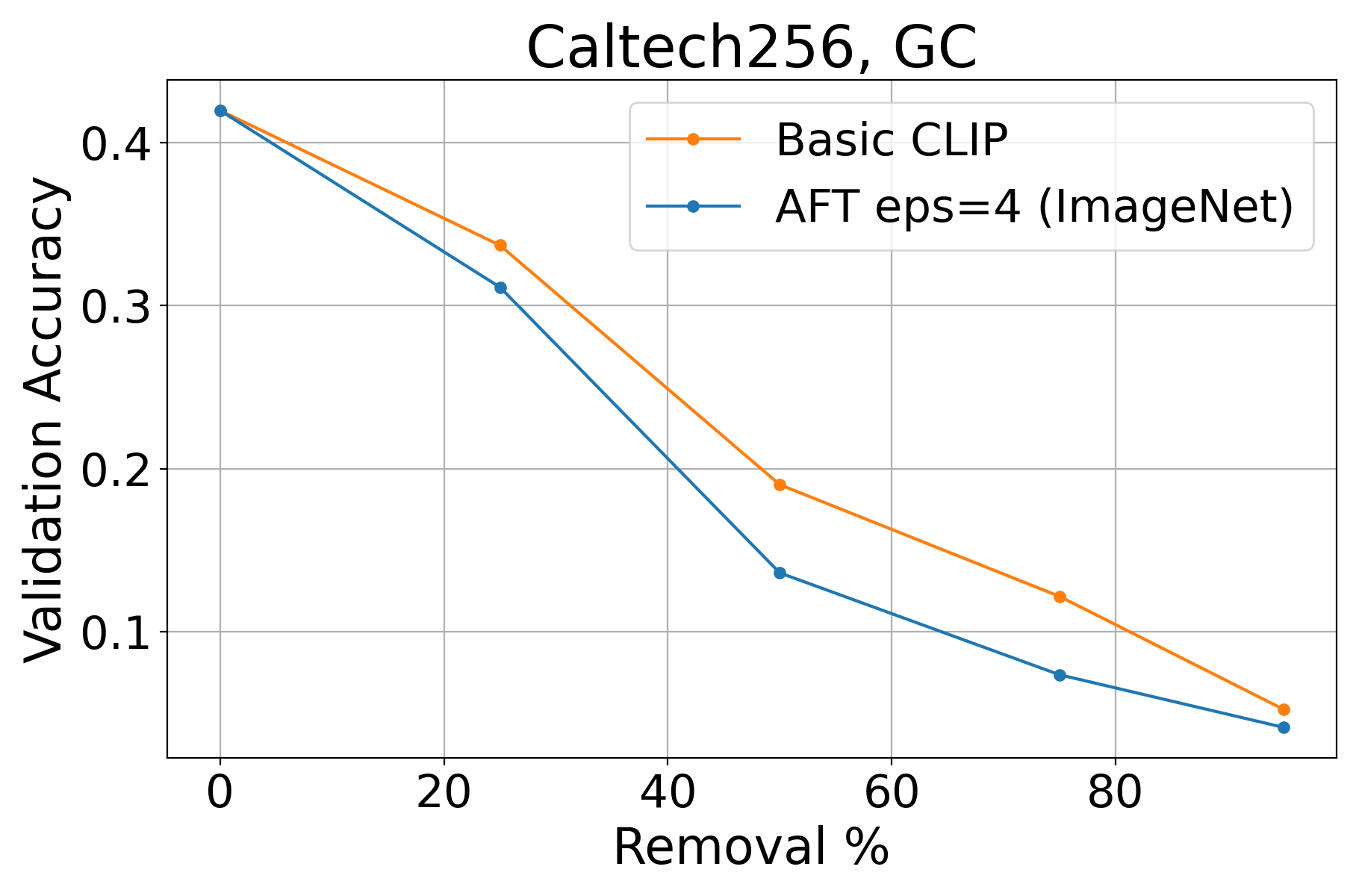}
        \caption{}
    \end{subfigure}

    \caption{\rev{ROAR on Imagenette (a: Simple Gradient, b: Grad-CAM) and Caltech256 (c: Simple Gradient, d: Grad-CAM). Each panel compares saliency maps from the reference AFT$^4$ model against maps from standard CLIP; curves show the validation accuracy of a ResNet-50 retrained from scratch at each removal level. Imagenette curves are means over three seeds; Caltech256 uses a single run.}}
    \label{fig:roar}
\end{figure}

\paragraph{\rev{Results.}} \rev{As illustrated in Figure~\ref{fig:roar}, saliency maps from the AFT reference model produce a steeper accuracy drop under ROAR than maps from standard CLIP in all four settings: removing the pixels AFT marks as salient destroys more task-relevant information, the expected outcome when a saliency method locates the features that drive classification.}

\rev{The effect is strongest in the Imagenette Grad-CAM panel (b), where accuracy falls 53.4 points from 0\% to 95\% removal against 22.9 for Simple Gradient; in panel (a) the AFT curve lies below the standard-CLIP curve at every removal level, and in panel (b) from the 50\% level onward.}

\rev{ROAR therefore supports the causal-relevance part of Claim~1 in all four settings we evaluate. The remaining difference from the original plots is quantitative and most plausibly follows from evaluation details the original paper does not specify.}


\subsection{Extensions to the original paper}
\subsubsection{Cross-Modal Retrieval Robustness on Flickr30k}

\paragraph{Experimental Setup.} To extend the evaluation beyond the benchmarks of the original paper, we evaluate cross-modal retrieval on Flickr30k, measuring Recall@1 and Recall@5 in both directions under clean conditions and under perturbations with $\varepsilon \in \{2/255, 4/255\}$. \rev{We compare baseline CLIP with the AFT models, reporting both the author-provided AFT$^4$ checkpoint and our retrained one} (Table~\ref{tab:flickr30k_retrieval_formatted}).

\textbf{Clean Performance ($\varepsilon = 0$).}
Under clean conditions, our retrained $\mathrm{AFT}^1$ model improves text-to-image retrieval performance compared to the CLIP baseline, achieving higher Recall@k scores. \rev{This is consistent with the known regularising effect of mild adversarial perturbations \citep{xie2020adversarial, salman2020adversarially}. The improvement is direction-dependent: image-to-text recall decreases slightly under AFT$^1$ (R@1 44.41 to 43.02), plausibly because AFT reshapes the image-embedding geometry while the text encoder stays frozen.}

\textbf{Adversarial Robustness ($\varepsilon = 2/255, 4/255$).} Under attack, standard CLIP collapses to near-zero recall already at $\varepsilon = 2/255$, while the AFT models retain substantial retrieval capability: \rev{AFT$^1$ offers the better balance at $\varepsilon = 2/255$, and AFT$^4$ clearly wins under $\varepsilon = 4/255$.}

\textbf{Reproducibility.} The close agreement between the performance of our retrained $\mathrm{AFT}^4$ model and the publicly available checkpoint demonstrates that the robustness gains reported in the original paper are reproducible.

\subsubsection{Saliency-Guided Regularization}
\label{sec:sgr}
\textbf{Experimental setup.} We evaluate the fine-tuning strategy of Section~\ref{sec:methepg}, \rev{trained on COCO 2017 (train2017) with the EPG loss, under the zero-shot ImageNet-S localisation protocol of Section~\ref{sec:claim1}}, using Simple Gradient, Grad-CAM, and Grad-ECLIP saliency maps and sweeping \(w_s \in \{0.2, 0.4, 0.6\}\). We report the same five localisation metrics as Section~\ref{sec:claim1} and evaluate adversarial robustness on the five zero-shot classification benchmarks of Section~\ref{sec:claim4}.

\textbf{Saliency map quality.} Table~\ref{tab:saliency_coco} reports localisation metrics on ImageNet-S across the four settings (\(w_s = 0\), the COCO-fine-tuned control without saliency loss, and \(w_s \in \{0.2, 0.4, 0.6\}\)). For Simple Gradient, every metric improves monotonically with \(w_s\), with the strongest gain on the Pointing Game \rev{(\(+36\%\) relative at \(w_s = 0.6\)). Grad-CAM responds more modestly: its Pointing Game rises from 0.556 to 0.614 over the sweep while its remaining metrics move little, so the dose reaches Grad-CAM's argmax localisation but not its mass distribution.}

\textbf{Transfer to Grad-ECLIP.} Notably, the saliency loss also improves Grad-ECLIP despite never optimising it directly: all five localisation metrics improve monotonically with \(w_s\). This indicates that the saliency loss does not merely overfit the Simple Gradient pathway but induces a structural change in how the encoder distributes gradient information across the input. This cross-method transfer provides some of the strongest evidence that the saliency regularisation improves the encoder's internal representations. A qualitative counterpart to Figure~\ref{fig:gradient_maps}, showing the same image--prompt pairs under saliency-guided regularisation at $w_s = 0.6$, is provided in Appendix~\ref{appendix:saliency_qualitative}; the accompanying repository additionally contains the equivalent figures for $w_s \in \{0.2, 0.4\}$.

\begin{figure}[h]
\centering
\begin{minipage}[t]{0.58\textwidth}
\centering
\vspace{0pt}
\small
\setlength{\tabcolsep}{3pt}
\captionof{table}{Localisation metrics on the ImageNet-S validation set (12,419 images, protocol of Section~\ref{sec:claim1}) for three saliency methods (SG, GC, Grad-ECLIP). \(w_s{=}0\): control fine-tuned on COCO with the identical recipe but without the saliency loss. Remaining rows increase the saliency-loss weight \(w_s\) over the same converged robust checkpoint.}
\label{tab:saliency_coco}
\setlength{\tabcolsep}{2.5pt}%
\begin{tabular}{llccccc}
\toprule
$w_s$ & Method & PG\(\uparrow\) & E-PG\(\uparrow\) & AP\(\uparrow\) & PA\(\uparrow\) & IoU\(\uparrow\) \\
\midrule
\multirow{3}{*}{$w_s=0$}
 & SG      & \rev{0.5579} & \rev{0.4508} & \rev{0.4944} & \rev{0.6866} & \rev{0.2426} \\
 & GC      & \rev{0.5560} & \rev{0.4680} & \rev{0.4977} & \rev{\textbf{0.6568}} & \rev{0.2364} \\
 & Grad-ECLIP  & \rev{0.6547} & \rev{0.5189} & \rev{0.6309} & \rev{0.7257} & \rev{0.3211} \\
\midrule
\multirow{3}{*}{\(w_s = 0.2\)}
 & SG      & \rev{0.6527} & \rev{0.5056} & \rev{0.5506} & \rev{0.7058} & \rev{0.2753} \\
 & GC      & \rev{0.5699} & \rev{0.4675} & \rev{0.4906} & \rev{0.6479} & \rev{0.2302} \\
 & Grad-ECLIP  & \rev{0.7231} & \rev{0.5347} & \rev{0.6574} & \rev{0.7342} & \rev{0.3355} \\
\midrule
\multirow{3}{*}{\(w_s = 0.4\)}
 & SG      & \rev{0.7144} & \rev{0.5444} & \rev{0.5891} & \rev{0.7183} & \rev{0.2976} \\
 & GC      & \rev{0.5831} & \rev{0.4692} & \rev{0.4905} & \rev{0.6484} & \rev{0.2300} \\
 & Grad-ECLIP  & \rev{0.7703} & \rev{0.5489} & \rev{0.6770} & \rev{0.7405} & \rev{0.3461} \\
\midrule
\multirow{3}{*}{\(w_s = 0.6\)}
 & SG      & \rev{\textbf{0.7604}} & \rev{\textbf{0.5776}} & \rev{\textbf{0.6190}} & \rev{\textbf{0.7276}} & \rev{\textbf{0.3145}} \\
 & GC      & \rev{\textbf{0.6137}} & \rev{\textbf{0.4915}} & \rev{\textbf{0.5016}} & \rev{0.6424} & \rev{\textbf{0.2365}} \\
 & Grad-ECLIP  & \rev{\textbf{0.7981}} & \rev{\textbf{0.5591}} & \rev{\textbf{0.6884}} & \rev{\textbf{0.7443}} & \rev{\textbf{0.3525}} \\
\bottomrule
\end{tabular}
\end{minipage}
\hfill
\begin{minipage}[t]{0.40\textwidth}
\centering
\vspace{0pt}
\includegraphics[width=\textwidth]{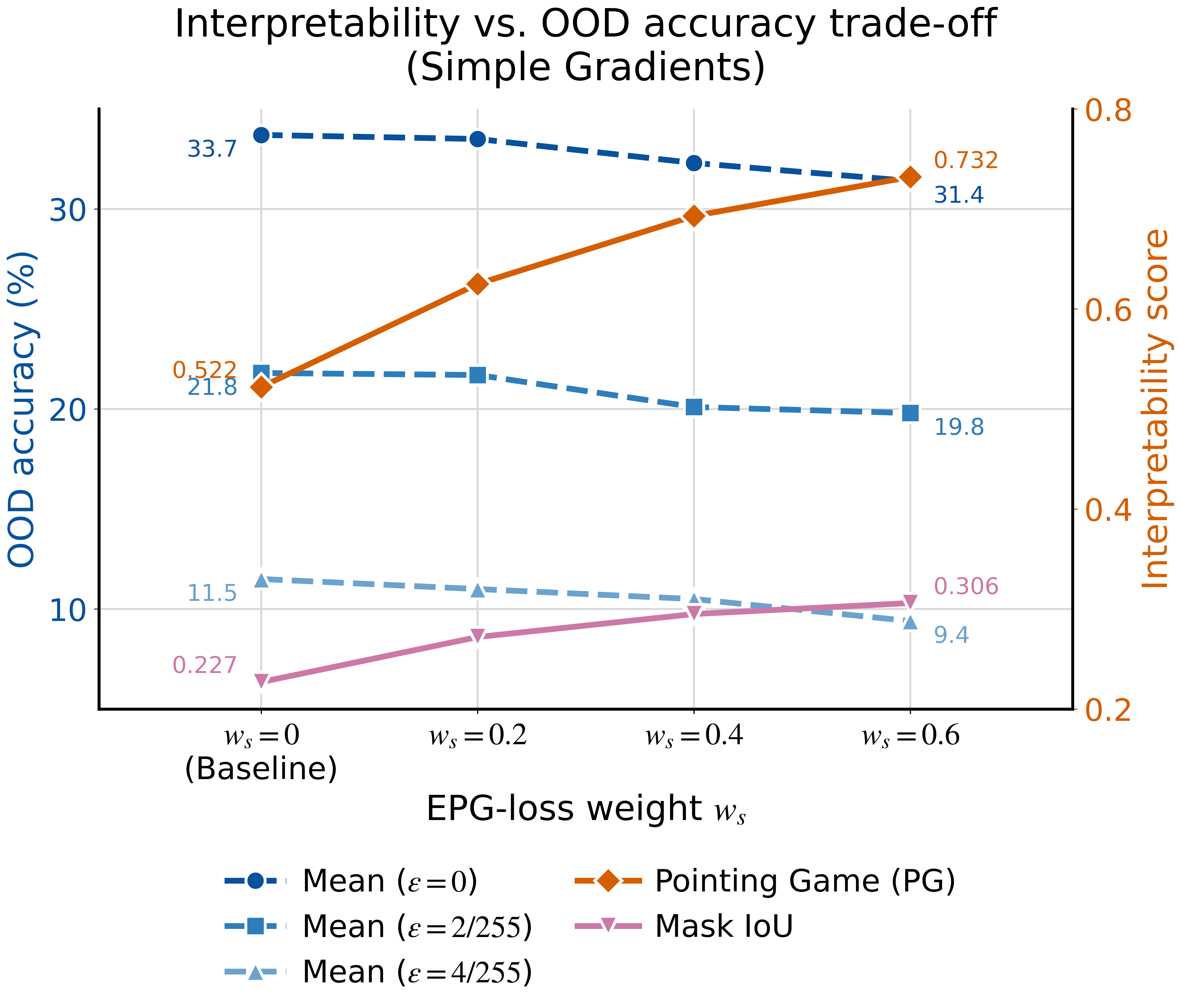}
\captionof{figure}{\rev{Interpretability--accuracy trade-off across EPG-loss weights $w_s$. Solid lines show Simple Gradient Pointing Game and Mask IoU on ImageNet-S; dashed lines show mean zero-shot accuracy across five benchmarks under $\varepsilon \in \{0, 2/255, 4/255\}$. The $w_s=0$ setting is the matched COCO-fine-tuned control.}}
\label{fig:saliency_tradeoff}
\end{minipage}
\end{figure}

\textbf{Interpretability--accuracy trade-off.} Figure~\ref{fig:saliency_tradeoff} jointly visualises the evolution of interpretability and
zero-shot accuracy as $w_s$ increases. At $w_s=0.2$, Simple Gradient
Pointing Game and Mask IoU improve by 9.5 and 3.3 percentage points,
respectively, over the matched $w_s=0$ control, while mean clean,
$2/255$, and $4/255$ zero-shot accuracy decrease by only 1.3, 1.0,
and 0.5 points. At $w_s=0.6$, the corresponding interpretability gains
reach 20.3 and 7.2 points, while the three accuracy reductions remain
between 3.3 and 3.9 points. The sweep therefore exposes a favourable,
tunable trade-off: low EPG weights substantially improve foreground
alignment with little loss in clean or adversarial accuracy, whereas
larger weights provide stronger localisation at a moderate accuracy
cost.


\textbf{Effect on concept alignment.} To test whether the saliency extension affects the encoder's internal representations beyond gradient structure, we repeat the Network Dissection and concept-alignment evaluations of Section~\ref{sec:concept} for all EPG-fine-tuned models \rev{and a matched $w_s{=}0$ control. Detector counts rise monotonically with the dose (326, 344, 363 of 768 at IoU 0.04 versus 320; Table~\ref{tab:nd_iou_sweep}), as does mean Soft-WPMI alignment (0.2808, 0.2845, 0.2861 versus 0.2781)}: the EPG loss modifies where gradient mass is concentrated \rev{while leaving semantic structure intact, with, if anything, a slight gain.}


\rev{\textbf{Faithfulness metrics decoupled from the training objective.} We probe the extension on two faithfulness metrics independent of the mask-overlap objective the EPG loss optimises (Appendix~\ref{appendix:epg_faithfulness}). Under ROAR, removing EPG-selected pixels degrades accuracy at or below the recipe-matched $w_s{=}0$ control in all four settings for Simple Gradient, and for Grad-CAM up to the highest removal levels, where the curves cross as both arms approach the accuracy floor; the margins are modest but consistent, supporting improved faithfulness on a retraining-based metric not aligned with the training objective. Under deletion and insertion \citep{petsiuk2018rise}, however, the separation from a random-ordering control does not grow with the dose and shrinks on the ImageNet validation set, so the deletion/insertion evidence is weaker and less consistent than the ROAR result. A side finding supports the Grad-CAM analysis of Section~\ref{sec:claim1}: under a zeros deletion baseline, Grad-CAM deletion AUC sits above the random control for all six checkpoints. Overall, the extension's clearest measurable effect remains on where saliency mass lands (Tables~\ref{tab:saliency_coco} and \ref{tab:ood_loc}), with supporting but mixed evidence from decoupled faithfulness metrics.}

\rev{\textbf{Prompt-specific localisation.} On 1,838 COCO val2017 images with two prompted object categories each, we generate one saliency map per prompt and measure how much saliency mass follows the prompted object (Appendix~\ref{appendix:epg_faithfulness}). The metric passes its sanity check: Grad-ECLIP is strongly prompt-specific on the AFT baseline (mass ratio 0.73 against chance 0.5) while raw Simple Gradient sits near chance. The EPG dose slightly reduces prompt discrimination. The one quantity it raises, the Simple Gradient pointing game (0.124 to 0.293), only checks that the hottest pixel lands inside the prompted mask, so it reflects sharper foreground pointing, not discrimination. We do not claim the extension improves prompt-specific interpretability.}

\rev{\textbf{Background-restricted adversarial attack.} Proposition~\ref{prop:bg-adv} concerns perturbations confined to the image background. It states that suppressing the feature displacement induced by such perturbations is equivalent, to leading order, to minimising the EPG loss. An EPG-trained encoder should therefore be resistant to background-restricted attacks. We test this prediction with a PGD attack restricted to the background (Table~\ref{tab:bgattack}); the proposition and its proof are given in Appendix~\ref{sec:theory-epg}. For each model we compare the feature damage caused by background perturbations to that caused by foreground perturbations. The plain-AFT baseline sits at 1.25--1.29, so its features are more sensitive to the background than to the object. Every EPG model falls below one, and the ratio decreases monotonically with the EPG dose (0.94, 0.79, 0.68 at $\varepsilon = 2/255$), which is exactly the invariance the proposition predicts. }
\rev{Consistent with the theory, EPG redistributes the encoder's sensitivity toward the foreground; this is a targeted reweighting of where the model is sensitive rather than a uniform reduction in adversarial vulnerability, and it comes with the same modest clean- and adversarial-accuracy trade-off
documented above.}



\textbf{Results.} \rev{The saliency-guided loss improves foreground alignment most clearly} for Simple Gradient, the attribution method it directly optimises, and this improvement partially transfers to Grad-ECLIP. \rev{Grad-CAM shows a smaller, mostly Pointing-Game response to the EPG weight, which we read with caution given the instability observed for CLIP-style encoders in Section~\ref{sec:claim1}. The gains are strongest on mask-based localisation metrics, especially Pointing Game, Energy Pointing Game, AP, and IoU, while the effect on concept alignment is small and does not indicate a loss of semantic structure.}


\rev{The decoupled evaluations qualify these gains: ROAR indicates that EPG-selected pixels are more task-relevant (though the effect narrows at high removal), while deletion/insertion and prompt-specific localisation show no consistent improvement. The background-restricted attack confirms Proposition~\ref{prop:bg-adv} only relatively: EPG shifts sensitivity toward the foreground without a net robustness gain. The extension is thus best seen as a targeted way to concentrate
saliency mass on foreground objects, at a modest, tunable cost in clean
and adversarial zero-shot accuracy.}

\section{Discussion and Conclusion}





This reproduction study provides directional support for the main findings \rev{of \citet{gong2025boosting}. Under our independently implemented evaluation pipeline, AFT improves Simple Gradient saliency localisation, increases neuron-concept alignment, and provides substantially better adversarial robustness than standard CLIP, although these gains come with a reduction in clean zero-shot accuracy. Our retrained checkpoints closely match the author-released checkpoints, which supports model-level reproducibility.}

\rev{Table~\ref{tab:claim_summary} summarises the outcome for each claim. The strongest evidence concerns the Simple Gradient component of the interpretability claim; standard Grad-CAM remains unstable for CLIP-style ViT encoders even though it behaves normally in our CNN sanity check, so we treat its results as inconclusive rather than as decisive evidence for or against AFT.}


\begin{table}[h]
\color{revcolor}
\centering
\caption{Reproduction outcome for the four claims of \citet{gong2025boosting}. References given in text.}
\label{tab:claim_summary}
\footnotesize
\setlength{\tabcolsep}{4pt}
\begin{tabular}{@{}>{\raggedright\arraybackslash}p{0.21\textwidth}>{\raggedright\arraybackslash}p{0.14\textwidth}>{\raggedright\arraybackslash}p{0.51\textwidth}@{}}
\toprule
Claim & Verdict & Primary source of discrepancy \\
\midrule
1a: Simple Gradient & Directional & Protocol choices shift absolute values; none reorder models \\
1b: Grad-CAM & Inconclusive & Unstable on CLIP ViT-B/16 ($14\times14$ grid); metrics still improve \\
2: Concept alignment & Trend & Detector counts set by undocumented IoU threshold ($9\times$ range) \\
3: Transferability & Directional & Quantitative in 2/3 settings; Kvasir-SEG uninformative \\
4: Accuracy trade-off & Reproduced$^\ast$ & EuroSAT at chance for all AFT models \\
\bottomrule
\end{tabular}
\\[2pt]
{\footnotesize $^\ast$ on natural-image benchmarks.}
\end{table}

\rev{Our saliency-guided EPG extension shows that explicit foreground-mask supervision can further improve mask-based localisation metrics and partially transfer to Grad-ECLIP. }
\rev{ROAR provides modest supporting evidence that the highlighted pixels are more task-relevant, with the clearest margins for Simple Gradient and small crossovers for Grad-CAM at high removal fractions, and the background-restricted adversarial evaluation confirms the invariance predicted by our theoretical analysis. At the same time, deletion/insertion and prompt-specific localisation do not improve consistently with the saliency weight, so the extension is best understood as a targeted improvement in foreground alignment rather than a broad gain in attribution faithfulness. Overall, AFT appears reproducible in its main qualitative trends, while exact numerical reproduction and stronger claims about faithfulness require careful attention to protocol details and saliency-method limitations.}

\bibliography{references}
\bibliographystyle{tmlr}

\appendix

\section{Datasets}
\label{appendix:datasets}

\begin{table}[H]
\caption{Datasets used in our reproduction study.}
\label{tab:datasets}
\centering
\begin{tabular}{lll}
\toprule
\textbf{Dataset} & \textbf{Usage} & \textbf{\#images} \\
\midrule
ImageNet 2012 & Adversarial fine-tuning & 1.28M \\
COCO 2017 & Saliency-guided fine-tuning, linear probing, EPG evaluations & 118K \\
ImageNet-S & Saliency localisation evaluation & 12.4K \\
Caltech256 & ROAR analysis & 30.6K \\
Flickr30k & Retrieval evaluation & 31K \\
ISIC 2024 & OOD saliency evaluation & 401K \\
Chest X-ray & OOD saliency evaluation & 5.9K \\
Wildlife Animals & OOD saliency evaluation & 5.2K \\
Kvasir-SEG & OOD saliency evaluation & 1K \\
Broden & Neuron-concept alignment & 63K \\
\midrule
\multicolumn{3}{l}{\textbf{Zero-shot Evaluation Datasets (Validation/Test Splits)}} \\
CIFAR-10 & Zero-shot evaluation & 10K \\
STL-10 & Zero-shot evaluation & 8K \\
EuroSAT & Zero-shot evaluation & 5.4K \\
FGVC Aircraft & Zero-shot evaluation & 3.3K \\
Imagenette & Zero-shot evaluation, ROAR analysis & 3.9K \\
\bottomrule
\end{tabular}
\end{table}

\section{Adversarial Fine-Tuning Formulation}
\label{appendix:AFT}

The original paper formulates AFT as a minimax optimisation problem that enforces robustness while implicitly regularising input gradients.

Given an image $x$, its image embedding $I_x = f_\theta(x)$ (unit-normalised), and corresponding text embedding $T_x$ (unit-normalised), the supervised AFT objective is:
\begin{equation}
\min_\theta \E_{x \sim \mathcal{D}_{\text{train}}} \max_{\delta_x}
\left[ \frac{1}{2} \left( T_x^\top \E_{z \sim \mathcal{N}(0, \sigma^2 I)}[f_\theta(x+z+\delta_x)] - T_x^\top I_x \right)^2 - h(\delta_x) \right]
\label{eq:supervised_aft}
\end{equation}
where $\delta_x$ is the adversarial perturbation, $z$ is Gaussian smoothing noise that improves optimisation stability, and $h(\cdot)$ is a regularisation term that constrains the perturbation magnitude. This formulation corresponds to Equation~(1) in \citet{gong2025boosting}.

\rev{\citet{gong2025boosting} show (their Section~3.1) that a first-order Taylor expansion of the inner maximisation exposes the gradient of the Gaussian-smoothed similarity score, i.e.\ the SmoothGrad map \citep{smilkov2017smoothgrad}, and that Fenchel duality rewrites the objective as an explicit penalty $h^\star$ on that map. Choosing the smooth $\ell_1$ (Huber) form for $h^\star$ \citep{huber1964robust} gives a penalty that is quadratic for small gradient values and linear for large ones, promoting sparse input gradients and hence cleaner attributions. We adopt this formulation without modification. The derivation is given in Section~3.1 of \citet{gong2025boosting}.}


\section{\rev{Qualitative Effect of Saliency-Guided Regularisation}}
\label{appendix:saliency_qualitative}

\rev{Figure~\ref{fig:saliency_qualitative_ws06} complements the quantitative results of Section~\ref{sec:sgr}: for the image--prompt pairs of Figure~\ref{fig:gradient_maps}, it shows Simple Gradient and Grad-ECLIP maps from the AFT$^4$ backbone fine-tuned with saliency-guided regularisation at $w_s = 0.6$. Across the sweep $w_s \in \{0.2, 0.4, 0.6\}$, each increment concentrates saliency mass further on the referenced object and weakens off-object activations, for both attribution methods, matching the monotonic metric trend in Table~\ref{tab:saliency_coco}. The gains between consecutive values are visually small, so we include only $w_s = 0.6$.}

\begin{figure}[h]
\centering
\includegraphics[width=0.62\textwidth]{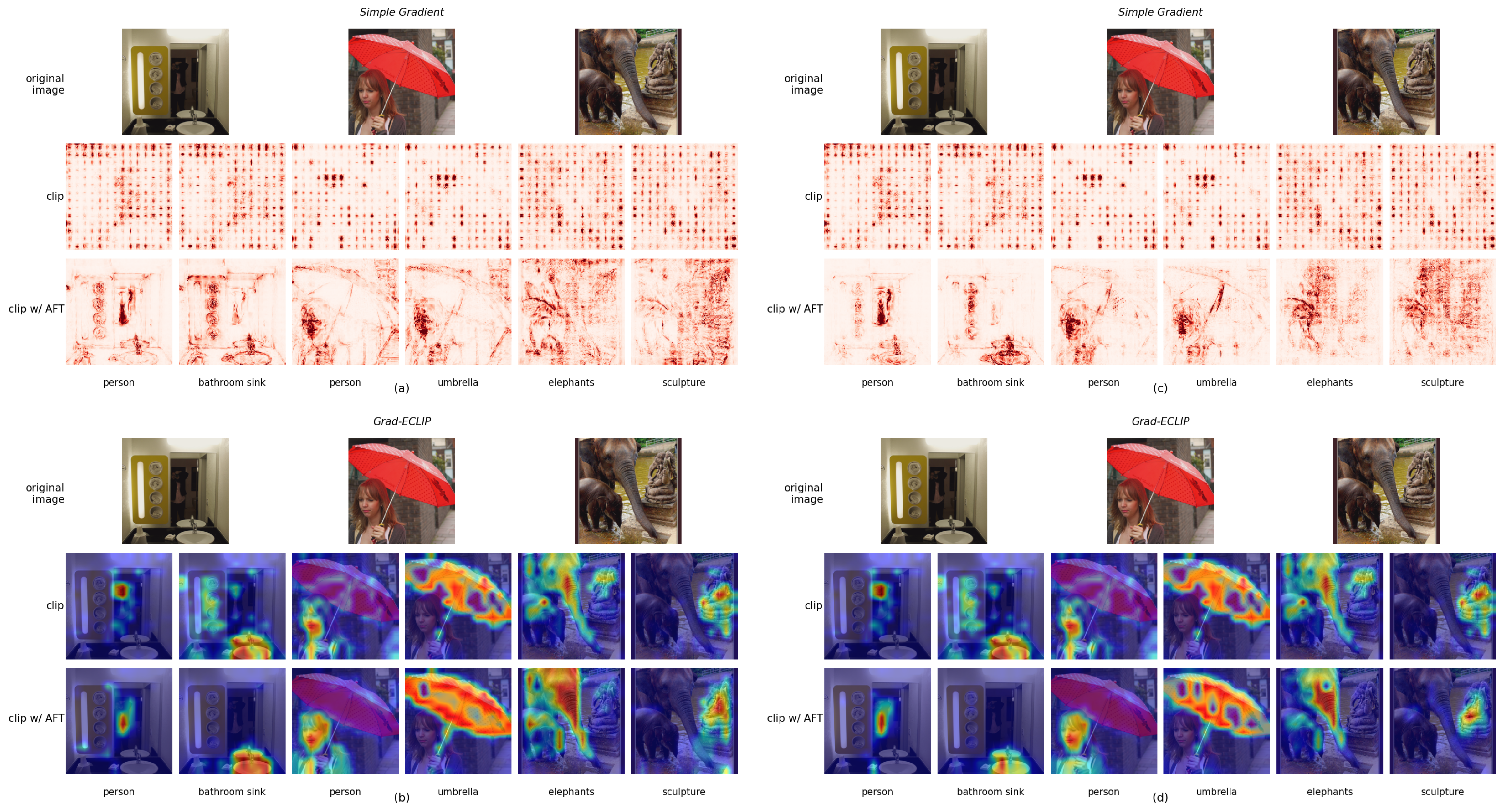}
\caption{Saliency map comparison between CLIP (middle row) and CLIP with AFT (bottom row) for the original AFT$^4$ backbone (left column, as in Figure~\ref{fig:gradient_maps}) and the AFT$^4$ backbone with saliency supervision (right column). Images (a) and (c) are Simple Gradient maps for different text prompts, images (b) and (d) are Grad-ECLIP visualisations for the same images.
Saliency mass is visibly more concentrated on the referenced object and contains fewer off-object artefacts than the corresponding AFT$^4$ maps in Figure~\ref{fig:gradient_maps}.}
\label{fig:saliency_qualitative_ws06}
\end{figure}


\section{\rev{Grad-CAM Sanity Check on a Standard CNN}}
\label{appendix:gradcam_sanity}

\rev{To verify that the Grad-CAM instability reported in Section~\ref{sec:claim1} is attributable to the CLIP ViT architecture rather than to our evaluation pipeline, we ran the identical Grad-CAM procedure on an ImageNet-pretrained ResNet-50 (torchvision) and on CLIP ViT-B/16 (zero-shot, prompt ``A photo of a \{\}.'') for 16 Imagenette validation images covering all ten classes. For the CNN, Grad-CAM is computed on the final convolutional block with respect to the ground-truth class logit; for CLIP we use the same procedure as in the main text.}

\rev{Figure~\ref{fig:gradcam_sanity} shows the resulting maps. ResNet-50 produces smooth heatmaps centred on the target object in essentially every image. CLIP ViT-B/16 is inconsistent: on some images it localises the object well, but on many it collapses into a few blocky high-intensity patches that frequently land on background regions. Quantitatively, ResNet-50 places $0.41 \pm 0.05$ of its saliency mass in the top-20\% pixels, tightly clustered across images; CLIP ViT-B/16 places $0.76 \pm 0.21$ there, with a four-fold larger spread. The failure mode is over-concentration in a few upsampled patches whose placement is unreliable. This is consistent with the class-activation map being computed on the coarse $14\times14$ token grid of ViT-B/16 and supports attributing the Grad-CAM failures to the architecture rather than to our implementation.}

\begin{figure}[h]
\centering
\includegraphics[width=0.62\textwidth]{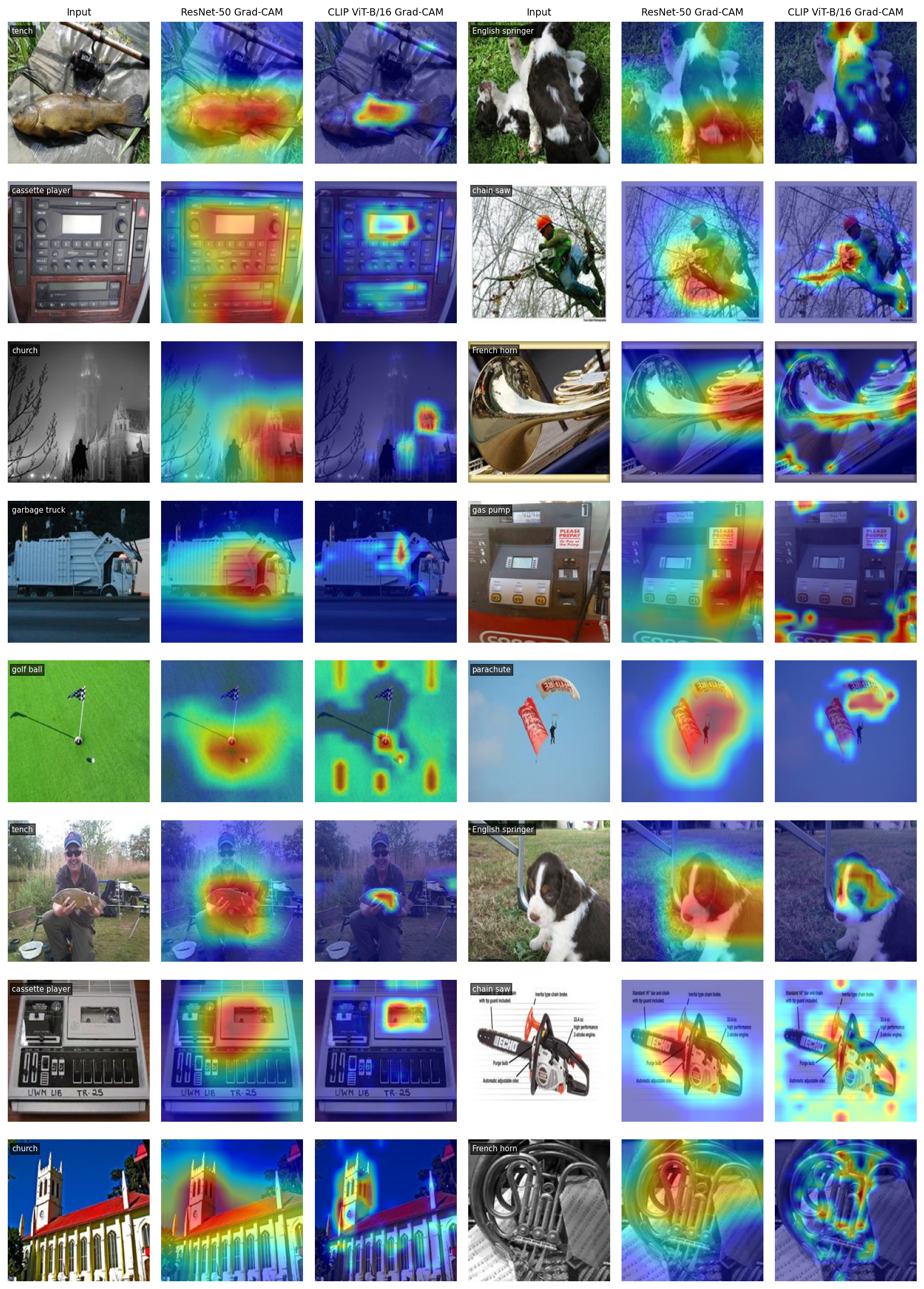}
\caption{\rev{Grad-CAM sanity check. Triplets of input image, ResNet-50 Grad-CAM, and CLIP ViT-B/16 Grad-CAM for 16 Imagenette validation images, computed with the identical pipeline. The CNN yields stable, object-centred maps; CLIP ViT-B/16 yields coarse, fragmented, and spatially unstable maps.}}
\label{fig:gradcam_sanity}
\end{figure}

\section{Environmental Impact}
\label{appendix:environmental_impact}
\rev{We estimated the carbon footprint of our experiments using the CodeCarbon \citep{codecarbon} framework, which provides automated tracking of hardware-level energy consumption and converts it into carbon emissions as $\mathrm{CO_2e} = \mathrm{CI} \cdot \mathrm{PUE} \cdot E$, where $\mathrm{CI}$ denotes the carbon intensity of electricity generation, $\mathrm{PUE}$ is the power usage effectiveness of the computing infrastructure, and $E$ represents the total electrical energy consumed by the CPU, GPU, and memory during the experiments.}

\rev{All reported values correspond to the computational resources required to train the main AFT$^4$ model from scratch, following the experimental protocol described in Section~\ref{subsec:params}.} Over a total runtime of approximately \textbf{34.6 hours}, the experiments consumed \textbf{28.7 kWh} of electricity.

\rev{Using the 2024 average carbon intensity the Netherlands and assuming a PUE of 1.0 (as reported by CodeCarbon), this corresponds to an estimated \textbf{7.7 kg of CO$_2$e emissions}.} Importantly, these estimates are fully reproducible, as they are directly derived from the raw CodeCarbon logs.

\rev{Beyond the initial training, the saliency-guided pipeline (COCO pre-training of the backbone, 5.9 hours, plus four fine-tuning runs, 11.9 hours, on two NVIDIA A100 GPUs (40\,GB each)) consumed \textbf{14.86 kWh} in total, emitting an estimated \textbf{3.98 kg CO$_2$e}; a single fine-tuning run accounts for roughly \textbf{0.66 kg CO$_2$e}.}
\section{Theoretical Properties of the Saliency-Guided Objective}
\label{sec:theory-epg}

\rev{The saliency-guided regularisation of Section~\ref{sec:methepg} admits an adversarial reading that clarifies its relationship to the framework of \citet{gong2025boosting}. Throughout, $s(x;\theta) = I_{\text{orig}}^\top f_\theta(x)$ is the feature consistency score, $g_x = \nabla_x s(x;\theta)$ its input gradient, $S_p = \sum_c |g_{x,p}^{(c)}|$ the unnormalised Simple Gradient attribution at pixel $p$, and $S = \{S_p\}_{p \in \Omega}$ the full spatial map, so that $\|S\|_1 = \sum_{p \in \Omega} S_p$ and $\mathcal{L}_{\text{EPG}} = 1 - \|S\|_1^{-1} \sum_{p \in M} S_p$.}

\paragraph{EPG as a background-restricted adversarial objective.}
\rev{The adversarial reading makes precise what EPG asks the encoder to be invariant to.} Let
\[
\mathcal{A}_{M^c}(\varepsilon)=
\{\delta:\|\delta\|_\infty\le\varepsilon,\;
\delta_p=0\ \forall p\in M\}
\]
denote the set of $\ell_\infty$-bounded perturbations supported \rev{only} on the background
$M^c=\Omega\setminus M$, and consider the worst-case change of the \rev{feature-consistency} score
\begin{equation}
    \Phi_M(x;\theta)
    =
    \max_{\delta\in\mathcal{A}_{M^c}(\varepsilon)}
    \big|s(x+\delta;\theta)-s(x;\theta)\big|.
    \label{eq:bg-adv}
\end{equation}

\begin{proposition}
\label{prop:bg-adv}
To first order in $\varepsilon$,
\begin{equation}
    \Phi_M(x;\theta)
    =
    \varepsilon\sum_{p\notin M}|g_{x,p}|
    +O(\varepsilon^2).
    \label{eq:bg-adv-first-order}
\end{equation}
Consequently, up to the normalising constant $\|S\|_1$ and the perturbation budget $\varepsilon$, minimising $\Phi_M$ over $\theta$ is equivalent, to leading order, to minimising $\mathcal{L}_{\mathrm{EPG}}$.
\end{proposition}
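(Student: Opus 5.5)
The plan is to Taylor-expand the score around the clean input and then solve the resulting linear maximisation exactly by duality of norms. Assume $s(\cdot;\theta)$ is twice continuously differentiable in a neighbourhood of $x$, which holds for the smooth activations of the ViT encoder and the unit normalisation away from zero. For any $\delta\in\mathcal{A}_{M^c}(\varepsilon)$, Taylor's theorem with remainder gives
\[
s(x+\delta;\theta)-s(x;\theta)=g_x^\top\delta+R(\delta),\qquad |R(\delta)|\le \tfrac12 \sup_{\xi}\|\nabla_x^2 s(\xi;\theta)\|\,\|\delta\|_2^2 .
\]
Here $\|\delta\|_2^2\le d\,\varepsilon^2$, with $d$ the input dimension. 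The remainder is therefore $O(\varepsilon^2)$ uniformly over the admissible set, with a constant depending on $x$, $\theta$ and $d$ but not on $\delta$.

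Next I will compute the linear term exactly. Since $\delta_p=0$ on $M$, we have $g_x^\top\delta=\sum_{p\notin M}\sum_c g^{(c)}_{x,p}\delta^{(c)}_p$. Hölder's inequality bounds its absolute value by $\varepsilon\sum_{p\notin M}\sum_c|g^{(c)}_{x,p}|=\varepsilon\sum_{p\notin M}S_p$, so I read $|g_{x,p}|$ in the statement as the channel-summed $\ell_1$ magnitude $S_p$. The bound is attained by $\delta^\star_p=\varepsilon\,\mathrm{sign}(g_{x,p})$ on $M^c$ and zero on $M$. That gives $\max_{\delta}|g_x^\top\delta|=\varepsilon\sum_{p\notin M}S_p$.

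Combining the two estimates gives the result. The upper bound is $\Phi_M\le \varepsilon\sum_{p\notin M}S_p+O(\varepsilon^2)$, and evaluating at $\delta^\star$ gives the matching lower bound $\Phi_M\ge \varepsilon\sum_{p\notin M}S_p-O(\varepsilon^2)$. Together these yield \eqref{eq:bg-adv-first-order}.

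For the equivalence, rewrite
\[
\mathcal{L}_{\mathrm{EPG}}=1-\|S\|_1^{-1}\sum_{p\in M}S_p=\|S\|_1^{-1}\sum_{p\notin M}S_p ,
\]
so that $\Phi_M=\varepsilon\|S\|_1\mathcal{L}_{\mathrm{EPG}}+O(\varepsilon^2)$. The leading term of $\Phi_M$ is thus $\mathcal{L}_{\mathrm{EPG}}$ times the factor $\varepsilon\|S\|_1$, which is exactly the qualification ``up to the normalising constant $\|S\|_1$ and the budget $\varepsilon$'' in the statement.

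The main obstacle is making ``equivalent minimisation'' precise, because $\|S\|_1$ depends on $\theta$. I would state the equivalence in two forms. First, at fixed gradient mass $\|S\|_1$, the leading-order objectives are proportional. Second, more generally, $\mathcal{L}_{\mathrm{EPG}}$ is the scale-invariant leading-order $\Phi_M$, namely $\Phi_M$ divided by the full unrestricted $\ell_\infty$ adversarial change $\varepsilon\|S\|_1+O(\varepsilon^2)$. In words, it is the fraction of worst-case sensitivity attributable to the background. I would also note a technical point: the gradient of $\Phi_M$ with respect to $\theta$ only needs the leading term, since the remainder's $\theta$-derivative is also $O(\varepsilon^2)$ under a $C^3$ assumption. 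Finally, the nondifferentiability of $|\cdot|$ at $g^{(c)}_{x,p}=0$ only affects a measure-zero set and does not change the value of the maximum.
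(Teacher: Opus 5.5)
Your proposal is correct and follows essentially the same route as the paper's proof. Both Taylor-expand $s$ around $x$, restrict the linear term to background pixels, maximise it with $\delta_p=\varepsilon\,\mathrm{sgn}(g_{x,p})$, and divide by $\|S\|_1$ using $\mathcal{L}_{\mathrm{EPG}}=\|S\|_1^{-1}\sum_{p\notin M}S_p$. Your additions make explicit what the paper's short proof leaves implicit: the uniform remainder bound that yields matching upper and lower bounds on $\Phi_M$, reading $|g_{x,p}|$ as the channel-summed $S_p$, and the caveat that $\|S\|_1$ depends on $\theta$.
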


\begin{proof}
Taylor-expanding $s$ around $x$,
\[
s(x+\delta;\theta)-s(x;\theta)
=
\langle g_x,\delta\rangle
+
O(\|\delta\|^2).
\]
Since $\delta_p=0$ for $p\in M$, the inner product reduces to
$\sum_{p\notin M}\langle g_{x,p},\delta_p\rangle$.
Each term is independently maximised over the $\ell_\infty$ ball by
$\delta_p=\varepsilon\,\mathrm{sgn}(g_{x,p})$, yielding
\[
\Phi_M(x;\theta)
=
\varepsilon\sum_{p\notin M}|g_{x,p}|
+
O(\varepsilon^2).
\]
Dividing by $\|S\|_1$ recovers
$\varepsilon\,\mathcal{L}_{\mathrm{EPG}}$.
\end{proof}

\rev{Proposition~\ref{prop:bg-adv} gives the EPG objective a concrete adversarial interpretation: minimising the EPG loss is, to leading order, equivalent to making the encoder invariant to perturbations that are restricted to the image background. This is the same first-order Taylor duality that motivates AFT, but applied to a spatially restricted perturbation set. The proposition therefore yields a directly testable prediction: introducing the EPG objective should progressively reduce the model's sensitivity to background perturbations relative to foreground perturbations.}

\rev{\paragraph{Empirical validation.}
To verify this prediction, we perform region-restricted PGD attacks (100 steps, 1{,}000 samples, $\varepsilon \in \{2/255, 4/255\}$) on the COCO validation set. Following Eq.~\ref{eq:bg-adv}, perturbations are constrained either to the annotated foreground mask or to its complement while keeping the same $\ell_\infty$ budget. We then measure the resulting feature-consistency damage $\Phi$, i.e., the worst-case change in the feature-consistency score under the corresponding perturbation.}

\begin{table}[h]
\centering
\caption{
Feature-consistency damage $\Phi$ under region-restricted PGD attacks on COCO. Background attacks perturb only pixels outside the object mask, while foreground attacks perturb only object pixels. Ratios below one indicate that the representation is less sensitive to background perturbations than to foreground perturbations, as predicted by Proposition~\ref{prop:bg-adv}. The plain-AFT row is our ImageNet-trained AFT$^4$ reproduction; the EPG rows share an identical COCO fine-tuning recipe and differ only in $w_s$.
}
\label{tab:bgattack}
\small
{\color{revcolor}%
\begin{tabular}{lcccc}
\toprule
Model & $\varepsilon$ & Background $\Phi$ & Foreground $\Phi$ & BG/FG \\
\midrule
AFT$^4$ (no EPG) & 2/255 & 0.0036 & 0.0029 & 1.253 \\
+ EPG $w_s{=}0.2$ & 2/255 & 0.0074 & 0.0079 & 0.936 \\
+ EPG $w_s{=}0.4$ & 2/255 & 0.0070 & 0.0088 & 0.787 \\
+ EPG $w_s{=}0.6$ & 2/255 & 0.0066 & 0.0097 & 0.677 \\
\midrule
AFT$^4$ (no EPG) & 4/255 & 0.0123 & 0.0095 & 1.293 \\
+ EPG $w_s{=}0.2$ & 4/255 & 0.0175 & 0.0188 & 0.933 \\
+ EPG $w_s{=}0.4$ & 4/255 & 0.0166 & 0.0212 & 0.784 \\
+ EPG $w_s{=}0.6$ & 4/255 & 0.0160 & 0.0237 & 0.674 \\
\bottomrule
\end{tabular}}
\end{table}

\rev{Table~\ref{tab:bgattack} confirms the theoretical prediction. The plain AFT model is slightly more sensitive to background perturbations than to foreground perturbations (BG/FG $>1$). Introducing the EPG objective consistently reverses this relationship across both perturbation budgets, with the BG/FG ratio falling below one for every EPG model and decreasing monotonically as the saliency weight $w_s$ increases. Thus, stronger saliency supervision progressively shifts the encoder's sensitivity toward foreground regions while suppressing sensitivity to the background, matching the behaviour predicted by Proposition~\ref{prop:bg-adv}. One caveat is that the plain-AFT baseline was never fine-tuned on COCO, so part of the AFT-versus-EPG contrast may reflect COCO exposure rather than the EPG term itself.}

\section{Protocol Sensitivity of the Localisation Metrics}
\label{appendix:protocol_sensitivity}

\rev{Section~\ref{sec:claim1} attributes the shift between our absolute localisation values and those of \citet{gong2025boosting} to undocumented protocol choices. To verify that these choices do not drive the model comparisons, we exposed the previously hardcoded protocol parameters as configuration and re-evaluated our retrained AFT$^4$ checkpoint against the author-released AFT$^4$ checkpoint in every cell of a sweep: binarisation threshold $\text{topk} \in \{0.1, 0.2, 0.3\}$ crossed with pointing-game hit radius $\in \{0, 5, 15\}$ pixels, plus two one-factor cells (bilinear mask downsampling, and clipping the saliency map at the 99th percentile). Each cell uses 3,000 images from the intersection of the ImageNet validation set and ImageNet-S. A non-fine-tuned CLIP arm was evaluated in every cell of the same grid (the clip columns below), so the AFT-versus-baseline gap is also shown per cell.}

\rev{As the metric definitions imply, Pointing Game is invariant to topk and IoU to the hit radius, so the grid collapses to Table~\ref{tab:protocol_pg} and the threshold summary below. Within every saliency method, the sign of the reproduction-minus-released gap is identical in all cells. The AFT-versus-baseline contrast is equally stable: the released AFT$^4$ beats non-fine-tuned CLIP in every cell for Simple Gradient (PG gap $+0.295$ to $+0.396$; $+0.113$ in the degenerate percentile-clipping cell) and Grad-CAM ($+0.325$ to $+0.393$, that cell included), while Grad-ECLIP sits at parity between the two models everywhere ($-0.014$ to $+0.025$).}

\begin{table}[h]
\centering
\caption{\rev{Pointing Game under varying hit radius (invariant to topk). clip = non-fine-tuned CLIP, base = author-released AFT$^4$, ours = our retrained AFT$^4$; gap = ours $-$ base. 3,000 ImageNet-S samples.}}
\label{tab:protocol_pg}
\small
{\color{revcolor}%
\begin{tabular}{c|cccc|cccc|cccc}
\toprule
& \multicolumn{4}{c|}{Simple Gradient} & \multicolumn{4}{c|}{Grad-CAM} & \multicolumn{4}{c}{Grad-ECLIP} \\
radius & clip & base & ours & gap & clip & base & ours & gap & clip & base & ours & gap \\
\midrule
0  & 0.261 & 0.556 & 0.559 & $+0.003$ & 0.270 & 0.595 & 0.571 & $-0.025$ & 0.667 & 0.655 & 0.661 & $+0.006$ \\
5  & 0.294 & 0.660 & 0.663 & $+0.004$ & 0.341 & 0.713 & 0.685 & $-0.028$ & 0.781 & 0.767 & 0.779 & $+0.012$ \\
15 & 0.388 & 0.783 & 0.783 & $+0.000$ & 0.463 & 0.856 & 0.829 & $-0.026$ & 0.908 & 0.902 & 0.906 & $+0.005$ \\
\bottomrule
\end{tabular}}
\end{table}

\rev{The binarisation-threshold sweep behaves identically for Mask IoU. Within every method the ours-minus-released gap keeps its sign and magnitude across topk $\in \{0.1, 0.2, 0.3\}$ (Simple Gradient within $0.001$, Grad-CAM $-0.009$ to $-0.010$, Grad-ECLIP $+0.002$ to $+0.003$). Two one-factor cells (bilinear mask downsampling, 99th-percentile clipping) stay within $-0.033$ to $+0.009$ across all three methods for both PG and E-PG.}

\rev{Two observations qualify the result. Percentile clipping at 99 is degenerate for Simple Gradient: it collapses PG from about 0.556 to about 0.18 for both AFT$^4$ checkpoints simultaneously (and non-fine-tuned CLIP from 0.261 to 0.073), so the tiny gap in that cell ($-0.004$) is at noise level. Grad-CAM is the one method where the reproduction trails the released checkpoint by a stable margin in every cell ($-0.025$ to $-0.028$ PG, $-0.009$ to $-0.010$ IoU), while Simple Gradient and Grad-ECLIP match to within noise. That stable Grad-CAM offset is small in absolute terms and is consistent with the instability of Grad-CAM on CLIP encoders discussed in Section~\ref{sec:claim1}.}

\section{Network Dissection Ablations}
\label{appendix:nd_ablation}

\rev{Complementing Section~\ref{sec:concept}, we re-ran Network Dissection on Broden (63,305 probing images, 768 neurons) with the detector thresholds, probed layer, and concept vocabulary exposed as configuration. Alignment scoring uses 210,365 images. The sweep covers the arms of the main text plus the author-released AFT$^4$ checkpoint and the COCO-fine-tuned models of Section~\ref{sec:sgr}.}

\begin{table}[h]
\centering
\caption{\rev{Detector counts across the IoU-threshold sweep (activation quantile 0.005, final token). The main-text operating point is IoU 0.04.}}
\label{tab:nd_iou_sweep}
\small
{\color{revcolor}%
\begin{tabular}{lccc}
\toprule
Model & IoU 0.02 & IoU 0.04 & IoU 0.06 \\
\midrule
\multicolumn{4}{l}{\textit{ImageNet fine-tuning only}} \\
CLIP (w/o AFT)     & 572 & 160 & 64 \\
AFT$^1$ (ours)     & 662 & 279 & 122 \\
AFT$^4$ (ours)     & 721 & 390 & 183 \\
AFT$^4$ (original) & 721 & 390 & 197 \\
\midrule
\multicolumn{4}{l}{\textit{COCO fine-tuning}} \\
\;$w_s{=}0$ (control) & 701 & 320 & 130 \\
\;+ EPG $w_s{=}0.2$   & 711 & 326 & 135 \\
\;+ EPG $w_s{=}0.4$   & 714 & 344 & 152 \\
\;+ EPG $w_s{=}0.6$   & 716 & 363 & 178 \\
\bottomrule
\end{tabular}}
\end{table}

\rev{The ordering AFT$^4 > $ AFT$^1 > $ CLIP holds in every ablation cell across the IoU-threshold, activation-quantile, probed-layer, and concept-vocabulary sweeps (the only tie is the saturated cell $q = 0.02$ with IoU 0.02, where all 768 neurons qualify for every model). The absolute detector count varies by a factor of nine for the baseline across the IoU window alone, which brackets both the original paper's counts and ours and identifies the threshold as the dominant source of the count discrepancy discussed in Section~\ref{sec:concept}. The released AFT$^4$ checkpoint matches our retrained one exactly at IoU 0.02 and 0.04 (721 and 390 detectors) and within 14 at 0.06. The COCO-fine-tuned family sits below the ImageNet-trained arms at every threshold, but the $w_s{=}0$ control attributes this to the additional fine-tuning stage rather than to the saliency term: within the family the count rises monotonically with the saliency weight at every threshold (320 to 363 at IoU 0.04, 130 to 178 at 0.06), recovering most of the gap to AFT$^4$ by $w_s{=}0.6$ (363 versus 390, and 178 versus 183). Explicit saliency supervision therefore leaves concept alignment intact, with a small gain if anything.}

\rev{Figure~\ref{fig:alignment_hist} shows the full score distributions behind Table~\ref{tab:concept4}: AFT consistently improves neuron-concept alignment across all three similarity metrics compared to the baseline, with the whole distribution shifting upward monotonically with stronger AFT.}

\begin{figure}[h]
\centering
\includegraphics[width=\textwidth]{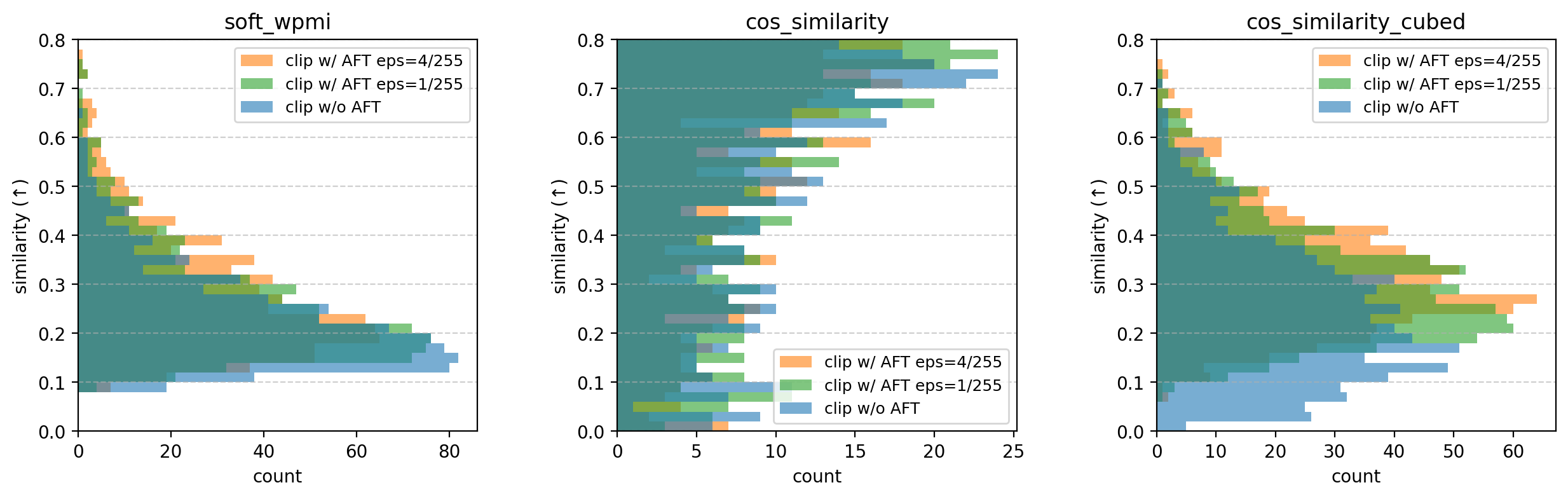}
\caption{Distribution of neuron-concept alignment scores under three
similarity metrics for CLIP with and without AFT.}
\label{fig:alignment_hist}
\end{figure}

\section{Zero-Shot Accuracy: Seed Sensitivity}
\label{appendix:seed_cis}

\rev{\paragraph{Seed sensitivity.} Table~\ref{tab:seed_cis} reports mean accuracy with two-sided 95\% Student-t confidence intervals over five evaluation seeds (42 to 46), each drawing a fresh 1,000-image subset. The attack (untargeted APGD on the cross-entropy loss, then targeted APGD on the DLR loss toward the highest-scoring incorrect class on surviving samples; 50 iterations per stage, no random restarts, no FAB or Square attacks) is deterministic, so the spread is sample-selection variance only. The widest interval in the study is $\pm 2.42$ points (STL-10, AFT$^1$, $\varepsilon = 2/255$). CLIP (w/o AFT) is omitted from the table since it loses all accuracy under both attack budgets on every seed, and its Table~\ref{tab:reproduction_results_complete} entries other than the five-seed Imagenette value keep the fixed seed 42. We also checked the near-identical EuroSAT rows of AFT$^4$ (original) and AFT$^4$ (ours) against the raw per-seed runs. The runs are distinct (separate checkpoints, and per-seed accuracies that differ on every other dataset), yet on EuroSAT the two checkpoints score identically in 14 of the 15 seed-budget cells and differ by 0.1 in one. This is consistent with both models falling back on the same prediction pattern at the chance floor, where per-seed accuracy is set by the subset composition rather than the weights.}

\begin{table}[h]
\centering
\caption{\rev{Zero-shot accuracy, mean $\pm$ 95\% CI over five evaluation seeds.}}
\label{tab:seed_cis}
\resizebox{0.85\textwidth}{!}{
{\color{revcolor}%
\begin{tabular}{llccccc}
\toprule
$L_\infty$ & Model & CIFAR-10 & STL-10 & EuroSAT & FGVC Air. & Imagenette \\
\midrule
\multirow{7}{*}{clean}
& AFT$^4$ (original) & 64.08 $\pm$ 1.58 & 89.96 $\pm$ 0.73 & 10.88 $\pm$ 0.89 & 12.84 $\pm$ 0.98 & 98.22 $\pm$ 0.36 \\
& AFT$^1$ (ours)     & 87.56 $\pm$ 0.90 & 97.38 $\pm$ 0.32 & 10.50 $\pm$ 1.18 & 20.82 $\pm$ 1.78 & 99.26 $\pm$ 0.17 \\
& AFT$^4$ (ours)     & 65.80 $\pm$ 1.45 & 90.88 $\pm$ 0.59 & 10.88 $\pm$ 0.89 & 12.26 $\pm$ 0.90 & 98.60 $\pm$ 0.18 \\
& COCO AFT ($w_s{=}0$) & 55.74 $\pm$ 1.71 & 83.42 $\pm$ 1.08 & 8.26 $\pm$ 0.60 & 13.68 $\pm$ 1.40 & 95.20 $\pm$ 0.28 \\
& + EPG $w_s{=}0.2$  & 52.96 $\pm$ 1.75 & 82.12 $\pm$ 1.09 & 5.36 $\pm$ 0.52 & 14.20 $\pm$ 1.32 & 95.10 $\pm$ 0.33 \\
& + EPG $w_s{=}0.4$  & 51.26 $\pm$ 0.99 & 80.22 $\pm$ 1.01 & 5.20 $\pm$ 0.80 & 13.70 $\pm$ 1.20 & 94.86 $\pm$ 0.41 \\
& + EPG $w_s{=}0.6$  & 50.32 $\pm$ 0.81 & 76.38 $\pm$ 1.61 & 6.22 $\pm$ 0.85 & 12.76 $\pm$ 1.19 & 94.26 $\pm$ 0.57 \\
\midrule
\multirow{7}{*}{2/255}
& AFT$^4$ (original) & 43.66 $\pm$ 1.88 & 77.22 $\pm$ 0.79 & 10.68 $\pm$ 0.73 & 3.84 $\pm$ 0.73 & 94.08 $\pm$ 0.42 \\
& AFT$^1$ (ours)     & 21.50 $\pm$ 1.34 & 58.38 $\pm$ 2.42 & 0.80 $\pm$ 0.15 & 1.40 $\pm$ 0.30 & 83.38 $\pm$ 0.87 \\
& AFT$^4$ (ours)     & 45.52 $\pm$ 1.69 & 78.78 $\pm$ 0.87 & 10.68 $\pm$ 0.73 & 3.58 $\pm$ 0.46 & 94.52 $\pm$ 0.36 \\
& COCO AFT ($w_s{=}0$) & 35.36 $\pm$ 1.40 & 64.68 $\pm$ 0.73 & 2.18 $\pm$ 0.80 & 4.46 $\pm$ 0.84 & 83.66 $\pm$ 1.38 \\
& + EPG $w_s{=}0.2$  & 34.76 $\pm$ 1.15 & 62.26 $\pm$ 1.50 & 1.50 $\pm$ 0.46 & 3.98 $\pm$ 0.82 & 82.84 $\pm$ 1.76 \\
& + EPG $w_s{=}0.4$  & 32.94 $\pm$ 0.65 & 59.20 $\pm$ 1.33 & 1.22 $\pm$ 0.34 & 3.70 $\pm$ 0.67 & 80.44 $\pm$ 1.53 \\
& + EPG $w_s{=}0.6$  & 31.58 $\pm$ 0.91 & 55.76 $\pm$ 1.88 & 1.80 $\pm$ 0.54 & 3.14 $\pm$ 0.72 & 78.56 $\pm$ 1.69 \\
\midrule
\multirow{7}{*}{4/255}
& AFT$^4$ (original) & 22.86 $\pm$ 0.85 & 55.88 $\pm$ 1.50 & 10.66 $\pm$ 0.74 & 1.12 $\pm$ 0.42 & 84.54 $\pm$ 1.19 \\
& AFT$^1$ (ours)     & 0.24 $\pm$ 0.29 & 7.28 $\pm$ 0.85 & 0.00 $\pm$ 0.00 & 0.00 $\pm$ 0.00 & 23.32 $\pm$ 1.39 \\
& AFT$^4$ (ours)     & 24.46 $\pm$ 0.94 & 58.54 $\pm$ 1.51 & 10.64 $\pm$ 0.69 & 1.16 $\pm$ 0.30 & 86.14 $\pm$ 0.81 \\
& COCO AFT ($w_s{=}0$) & 15.32 $\pm$ 1.61 & 38.18 $\pm$ 1.05 & 0.58 $\pm$ 0.39 & 1.22 $\pm$ 0.16 & 59.64 $\pm$ 1.75 \\
& + EPG $w_s{=}0.2$  & 15.26 $\pm$ 1.35 & 36.82 $\pm$ 1.36 & 0.50 $\pm$ 0.35 & 1.18 $\pm$ 0.16 & 58.64 $\pm$ 1.80 \\
& + EPG $w_s{=}0.4$  & 13.86 $\pm$ 1.05 & 34.24 $\pm$ 1.54 & 0.38 $\pm$ 0.33 & 1.08 $\pm$ 0.28 & 55.68 $\pm$ 1.66 \\
& + EPG $w_s{=}0.6$  & 12.56 $\pm$ 0.95 & 30.44 $\pm$ 1.34 & 0.58 $\pm$ 0.32 & 0.90 $\pm$ 0.34 & 52.42 $\pm$ 1.57 \\
\bottomrule
\end{tabular}}
}
\end{table}

\section{Quantitative Transfer Results: Full Tables}
\label{appendix:ood_loc_full}

\rev{Full five-metric versions of Table~\ref{tab:ood_loc} (Tables~\ref{tab:ood_zeroshot_full} and~\ref{tab:ood_coco}). All runs use 1,000 samples, seed 42, topk 0.2, and a zero-tolerance pointing game. SG = Simple Gradient, GC = Grad-CAM, GE = Grad-ECLIP; E-PG = Energy Pointing Game. The CLIP (w/o AFT) row of the COCO table uses a linear probe trained on frozen vanilla-CLIP features, so probe and encoder are matched.}

\rev{The full tables confirm the pattern of Table~\ref{tab:ood_loc}: the EPG dose response is monotone for Simple Gradient and Grad-ECLIP in every column, and the retrained models track the released checkpoint most tightly on Simple Gradient. Grad-CAM shows the largest reproduction gaps, and on Kvasir-SEG its columns fluctuate at near-floor signal levels, so we read them with caution. Table~\ref{tab:flickr30k_retrieval_formatted} reports the full Flickr30k cross-modal retrieval results underlying the discussion in the main text.}

\begin{table}[h]
\centering
\caption{Cross-modal retrieval results (Recall@k, \%) on the Flickr30k dataset under different perturbation budgets. $\mathrm{AFT}^i$ denotes models trained with perturbation $\varepsilon = i / 255$. (ours) denotes retrained models; (original) denotes original checkpoints provided by authors. Bold marks the best value per column within each budget.}
\label{tab:flickr30k_retrieval_formatted}
\resizebox{0.44\textwidth}{!}{
\begin{tabular}{cc|cc|cc}
\toprule
& & \multicolumn{2}{c|}{T$\rightarrow$I (\%)} & \multicolumn{2}{c}{I$\rightarrow$T (\%)} \\
$L_{\infty}$ & \begin{tabular}[c]{@{}c@{}}Vision\\ Encoder\end{tabular} & R@1 & R@5 & R@1 & R@5 \\ 
\midrule \midrule

\multirow{4}{*}{\rotatebox{90}{clean}} 
& CLIP baseline & 24.95 & 45.37 & \textbf{44.41} & \textbf{68.61} \\
& AFT$^1$ (\textit{ours}) & \textbf{27.20} & \textbf{48.52} & 43.02 & 67.09 \\
& AFT$^4$ (\textit{original}) & 19.53 & 37.50 & 26.38 & 47.54 \\
& AFT$^4$ (\textit{ours}) & 19.53 & 37.46 & 26.33 & 47.78 \\

\midrule

\multirow{4}{*}{\rotatebox{90}{2/255}} 
& CLIP baseline & 1.83 & 4.74 & 2.25 & 5.66 \\
& AFT$^1$ (\textit{ours}) & \textbf{18.59} & \textbf{35.95} & \textbf{30.72} & \textbf{52.05} \\
& AFT$^4$ (\textit{original}) & 17.88 & 34.73 & 24.07 & 43.66 \\
& AFT$^4$ (\textit{ours}) & 17.84 & 34.69 & 24.20 & 44.05 \\

\midrule

\multirow{4}{*}{\rotatebox{90}{4/255}} 
& CLIP baseline& 0.18 & 0.57 & 0.22 & 0.72 \\
& AFT$^1$ (\textit{ours}) & 7.80 & 17.15 & 13.60 & 26.43 \\
& AFT$^4$ (\textit{original}) & 13.96 & 27.71 & 18.36 & 34.61 \\
& AFT$^4$ (\textit{ours}) & \textbf{14.07} & \textbf{27.89} & \textbf{18.79} & \textbf{35.51} \\

\bottomrule
\end{tabular}
}
\end{table}

\begin{table}[h]
\centering
\caption{\rev{Zero-shot localisation on ISIC (dermoscopy) and Kvasir-SEG (colonoscopy polyps; small targets). The CLIP and AFT rows are fine-tuned on ImageNet only; the EPG rows are fine-tuned on COCO with COCO instance masks as supervision. Both domains are out-of-distribution for either training set, so the in-domain confound noted for Table~\ref{tab:ood_coco} does not apply here. Bold marks the best value per column; italics mark the best among the ImageNet-only arms.}}
\label{tab:ood_zeroshot_full}
\small
\resizebox{\textwidth}{!}{%
{\color{revcolor}%
\begin{tabular}{llccccc|ccccc|ccccc}
\toprule
& & \multicolumn{5}{c|}{Simple Gradient} & \multicolumn{5}{c|}{Grad-CAM} & \multicolumn{5}{c}{Grad-ECLIP} \\
& Model & PG & E-PG & AP & PA & IoU & PG & E-PG & AP & PA & IoU & PG & E-PG & AP & PA & IoU \\
\midrule
\multirow{9}{*}{\rotatebox[origin=c]{90}{ISIC}}
& \multicolumn{16}{l}{\textit{ImageNet fine-tuning only}} \\
& CLIP (w/o AFT)      & 0.107 & 0.238 & 0.253 & 0.609 & 0.091 & 0.190 & 0.261 & 0.286 & \textbf{\textit{0.622}} & 0.103 & 0.397 & 0.308 & 0.425 & 0.663 & 0.196 \\
& AFT$^4$ (original)  & 0.428 & \textit{0.441} & \textit{0.523} & \textit{0.714} & \textit{0.282} & 0.346 & \textit{0.367} & \textit{0.386} & 0.464 & \textit{0.281} & 0.460 & \textit{0.395} & \textit{0.537} & \textit{0.700} & \textit{0.272} \\
& AFT$^1$ (ours)      & \textit{0.457} & 0.401 & 0.484 & 0.699 & 0.249 & \textit{0.382} & 0.352 & 0.373 & 0.550 & 0.223 & \textit{0.469} & 0.364 & 0.498 & 0.691 & 0.246 \\
& AFT$^4$ (ours)      & 0.424 & 0.436 & 0.511 & 0.709 & 0.274 & 0.290 & 0.344 & 0.372 & 0.558 & 0.246 & 0.426 & 0.377 & 0.516 & 0.691 & 0.255 \\
& \multicolumn{16}{l}{\textit{COCO fine-tuning with EPG supervision}} \\
& + EPG $w_s{=}0.2$   & 0.648 & 0.530 & 0.634 & 0.750 & 0.350 & 0.598 & 0.581 & 0.504 & 0.489 & 0.281 & 0.750 & 0.464 & 0.689 & 0.745 & 0.362 \\
& + EPG $w_s{=}0.4$   & 0.763 & 0.588 & 0.698 & 0.766 & 0.383 & 0.683 & 0.626 & 0.561 & 0.543 & 0.288 & 0.852 & 0.497 & 0.738 & 0.762 & 0.389 \\
& + EPG $w_s{=}0.6$   & \textbf{0.820} & \textbf{0.638} & \textbf{0.747} & \textbf{0.778} & \textbf{0.407} & \textbf{0.753} & \textbf{0.673} & \textbf{0.589} & 0.522 & \textbf{0.294} & \textbf{0.905} & \textbf{0.518} & \textbf{0.766} & \textbf{0.772} & \textbf{0.404} \\
\midrule
\multirow{9}{*}{\rotatebox[origin=c]{90}{Kvasir-SEG}}
& \multicolumn{16}{l}{\textit{ImageNet fine-tuning only}} \\
& CLIP (w/o AFT)      & 0.058 & 0.123 & 0.135 & 0.693 & 0.062 & 0.054 & \textit{0.135} & 0.137 & 0.684 & 0.049 & \textit{0.085} & 0.161 & 0.175 & 0.704 & 0.082 \\
& AFT$^4$ (original)  & 0.051 & 0.172 & 0.200 & 0.725 & 0.121 & 0.036 & 0.082 & 0.174 & 0.343 & \textbf{\textit{0.142}} & 0.020 & 0.153 & 0.181 & 0.700 & 0.082 \\
& AFT$^1$ (ours)      & \textit{0.072} & \textit{0.191} & \textit{0.218} & \textit{0.735} & \textit{0.137} & \textit{0.094} & 0.125 & 0.156 & \textbf{\textit{0.688}} & 0.059 & 0.043 & \textit{0.167} & 0.189 & \textit{0.708} & \textit{0.094} \\
& AFT$^4$ (ours)      & 0.056 & 0.178 & 0.207 & 0.728 & 0.127 & 0.065 & 0.123 & \textit{0.191} & 0.465 & 0.135 & 0.028 & 0.162 & \textit{0.191} & 0.707 & 0.092 \\
& \multicolumn{16}{l}{\textit{COCO fine-tuning with EPG supervision}} \\
& + EPG $w_s{=}0.2$   & 0.113 & 0.194 & 0.228 & 0.737 & 0.143 & 0.048 & 0.106 & 0.180 & 0.486 & 0.124 & 0.037 & 0.182 & 0.221 & 0.726 & 0.123 \\
& + EPG $w_s{=}0.4$   & 0.180 & 0.217 & 0.259 & 0.750 & 0.166 & 0.053 & 0.139 & 0.194 & 0.591 & 0.121 & 0.086 & 0.198 & 0.252 & 0.741 & 0.150 \\
& + EPG $w_s{=}0.6$   & \textbf{0.265} & \textbf{0.249} & \textbf{0.303} & \textbf{0.765} & \textbf{0.197} & \textbf{0.095} & \textbf{0.174} & \textbf{0.211} & 0.593 & 0.130 & \textbf{0.187} & \textbf{0.217} & \textbf{0.304} & \textbf{0.760} & \textbf{0.187} \\
\bottomrule
\end{tabular}}%
}
\end{table}


\begin{table}[h]
\centering
\caption{\rev{COCO 6-class linear-probe localisation (person, car, dog, cat, bird, chair). Grad-ECLIP is not defined for the linear probe. The CLIP and AFT rows are fine-tuned on ImageNet only; the EPG rows are fine-tuned on COCO with COCO instance masks as supervision, so on this COCO-derived task part of their margin may reflect in-domain exposure rather than the EPG term itself. Bold marks the best value per column; italics mark the best among the ImageNet-only arms.}}
\label{tab:ood_coco}
\small
{\color{revcolor}%
\begin{tabular}{lccccc|ccccc}
\toprule
& \multicolumn{5}{c|}{Simple Gradient} & \multicolumn{5}{c}{Grad-CAM} \\
Model & PG & E-PG & AP & PA & IoU & PG & E-PG & AP & PA & IoU \\
\midrule
\multicolumn{11}{l}{\textit{ImageNet fine-tuning only}} \\
CLIP (w/o AFT)      & 0.140 & 0.156 & 0.166 & 0.701 & 0.076 & 0.226 & 0.212 & 0.243 & 0.666 & 0.119 \\
AFT$^4$ (original)  & \textit{0.333} & \textit{0.241} & \textit{0.261} & \textit{0.738} & \textit{0.145} & \textbf{\textit{0.244}} & \textbf{\textit{0.228}} & 0.254 & 0.656 & 0.128 \\
AFT$^1$ (ours)      & 0.324 & 0.227 & 0.254 & 0.735 & 0.139 & 0.230 & 0.216 & 0.250 & 0.654 & 0.120 \\
AFT$^4$ (ours)      & 0.328 & 0.240 & 0.260 & \textit{0.738} & 0.144 & 0.239 & 0.227 & \textbf{\textit{0.257}} & \textbf{\textit{0.680}} & \textbf{\textit{0.129}} \\
\midrule
\multicolumn{11}{l}{\textit{COCO fine-tuning with EPG supervision}} \\
+ EPG $w_s{=}0.2$   & 0.387 & 0.271 & 0.300 & 0.750 & 0.166 & 0.218 & 0.214 & 0.237 & 0.618 & 0.116 \\
+ EPG $w_s{=}0.4$   & 0.430 & 0.299 & 0.329 & 0.757 & 0.181 & 0.226 & 0.214 & 0.235 & 0.632 & 0.113 \\
+ EPG $w_s{=}0.6$   & \textbf{0.467} & \textbf{0.327} & \textbf{0.355} & \textbf{0.763} & \textbf{0.193} & 0.236 & 0.217 & 0.235 & 0.606 & 0.115 \\
\bottomrule
\end{tabular}}
\end{table}

\section{Additional Evaluations of the Saliency-Guided Extension}
\label{appendix:epg_faithfulness}

\rev{Results for the four evaluations of Section~\ref{sec:sgr} that use metrics not aligned with the EPG training objective.}

\rev{\paragraph{ROAR after EPG fine-tuning.} Same protocol as Section~\ref{sec:roar} (zero fill, ResNet-50 retrained from scratch), comparing a model trained with AFT on COCO with saliency loss $w_s{=}0.6$ against one trained without it ($w_s{=}0$). The two arms share an identical recipe, so they differ only in the saliency term. Once pixels are removed, the EPG curve tracks at or below the control in all four settings for Simple Gradient, and for Grad-CAM at the low and middle removal levels, with small crossovers in the Grad-CAM panels at the highest removal levels, where both arms approach the accuracy floor. The margins are modest; we read them as supporting evidence, on a metric not aligned with the training objective, that the pixels highlighted by EPG-supervised saliency maps are more task-relevant than those from the unsupervised baseline (Figure~\ref{fig:roar_epg}).}

\begin{figure}[h]
\centering
\includegraphics[width=\textwidth]{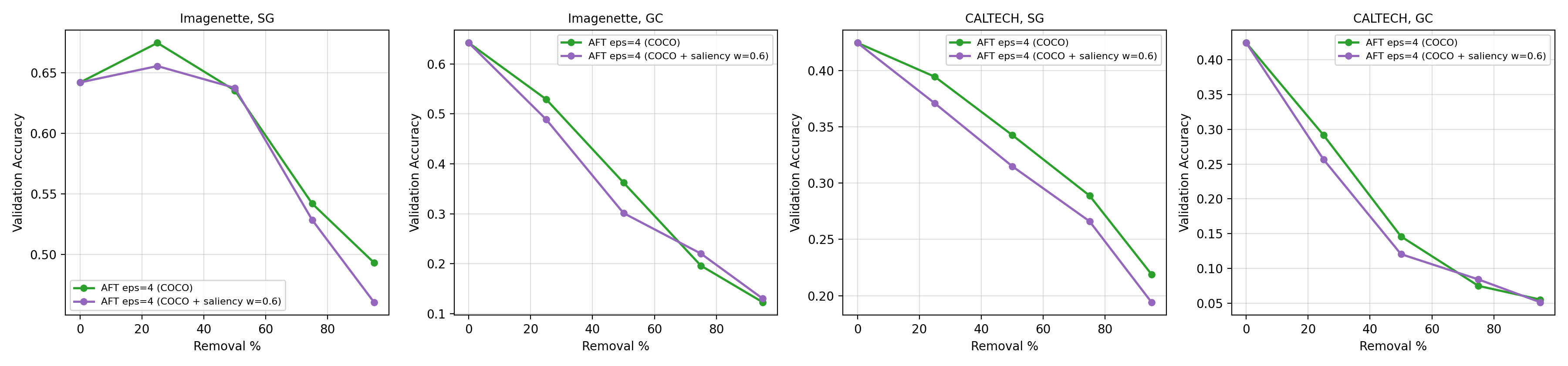}
\caption{\rev{ROAR after EPG fine-tuning. Validation accuracy of a ResNet-50 retrained from scratch on data with the top-$k\%$ most salient pixels removed, for the COCO $w_s{=}0$ control (green) and the $w_s{=}0.6$ model (purple). SG = Simple Gradient, GC = Grad-CAM.}}
\label{fig:roar_epg}
\end{figure}

\rev{\paragraph{Deletion and insertion.} RISE-convention deletion (lower AUC better) and insertion (higher AUC better), 2,000 images per dataset, 50 steps, target = predicted label. Insertion starts from a blurred image and is baseline-independent. The random-ordering control cancels each model's baseline confidence, so the gap between a saliency method and its own random control is the comparable quantity across checkpoints. Under the zeros deletion baseline, Grad-CAM AUC sits above each model's random control for all six checkpoints, consistent with the Grad-CAM instability discussed in Section~\ref{sec:claim1}.}

\begin{table}[!htb]
\centering
\caption{\rev{SmoothGrad separation from the random control. Deletion gap = random $-$ SmoothGrad (positive is good); insertion gap = SmoothGrad $-$ random (positive is good). Zeros deletion baseline. The AFT rows are fine-tuned on ImageNet only; the EPG rows are fine-tuned on COCO with COCO instance masks as supervision. Bold marks the best value per column; italics mark the best among the ImageNet-only arms.}}
\label{tab:delins_gap}
\small
{\color{revcolor}%
\begin{tabular}{lcccc}
\toprule
& \multicolumn{2}{c}{Imagenette} & \multicolumn{2}{c}{ImageNet-val} \\
\cmidrule(lr){2-3} \cmidrule(lr){4-5}
Model & Del gap & Ins gap & Del gap & Ins gap \\
\midrule
\multicolumn{5}{l}{\textit{ImageNet fine-tuning only}} \\
AFT$^4$ (original) & $+0.042$ & $+\textit{0.133}$ & $+0.012$ & $+0.130$ \\
AFT$^1$ (ours)     & $+\textbf{\textit{0.143}}$ & $+0.069$ & $+\textbf{\textit{0.039}}$ & $+0.131$ \\
AFT$^4$ (ours)     & $+0.045$ & $+0.129$ & $+0.014$ & $+\textbf{\textit{0.133}}$ \\
\multicolumn{5}{l}{\textit{COCO fine-tuning with EPG supervision}} \\
+ EPG $w_s{=}0.2$  & $+0.025$ & $+0.152$ & $+0.006$ & $+0.085$ \\
+ EPG $w_s{=}0.4$  & $+0.018$ & $+0.157$ & $+0.005$ & $+0.081$ \\
+ EPG $w_s{=}0.6$  & $+0.017$ & $+\textbf{0.161}$ & $+0.005$ & $+0.072$ \\
\bottomrule
\end{tabular}}
\end{table}

\rev{The gaps (Table~\ref{tab:delins_gap}) are positive in every cell, which validates the metric, but they do not grow with the EPG dose: on the ImageNet validation set the insertion gap shrinks from 0.130 for the AFT baseline to 0.072 at $w_s{=}0.6$, while on Imagenette it moves slightly the other way (0.133 to 0.161).}

\rev{\paragraph{Prompt-specific localisation.} 1,838 COCO val2017 images, each containing two sufficiently large object categories with instance masks. MassRatio = fraction of saliency mass on the prompted object (chance 0.5); $\Delta$ = shift of that mass when the prompt flips (chance 0); PG = pointing game against the prompted mask. Grad-ECLIP discrimination falls monotonically with the dose (relative to AFT$^4$ (ours): mass ratio 0.740 to 0.716, $\Delta$ 0.480 to 0.432) while Simple Gradient stays at chance throughout; the only quantity the dose raises is the Simple Gradient pointing game (0.124 to 0.293).}

\rev{\paragraph{Background-restricted adversarial attack.} Region-restricted PGD (100 steps, 1,000 samples) with the perturbation supported only on the background or only on the foreground mask, testing Proposition~\ref{prop:bg-adv}. $\Phi$ is the feature-consistency damage of Eq.~\ref{eq:bg-adv} (higher means the attack moved the features more), measured on COCO; the ratio column is the within-model background-to-foreground damage ratio. The damage table accompanies the proposition it tests (Table~\ref{tab:bgattack}, Appendix~\ref{sec:theory-epg}). The accuracy view of the same attack (ImageNet-S zero-shot) is mixed and confounded by the EPG models' lower clean accuracy, as discussed in Section~\ref{sec:sgr}.}

\end{document}